\documentclass[preprint,12pt,authoryear]{elsarticle}

\usepackage{amssymb}
\usepackage{amsmath}

\usepackage{amsthm}
\newtheorem{theorem}{Theorem}

\usepackage{booktabs} 
\usepackage{algorithm}
\usepackage{multirow}
\usepackage[normalem]{ulem} 
\usepackage{xcolor}         

\usepackage[utf8]{inputenc}
\usepackage{newunicodechar}
\newunicodechar{−}{\textminus}

\usepackage{hyperref}
\usepackage{textcomp}
\usepackage{stfloats}
\usepackage{url}
\usepackage{verbatim}
\usepackage{graphicx}
\usepackage{amsmath,amsfonts}
\usepackage{algorithmic}
\usepackage{array}
\usepackage{makecell}
\usepackage{rotating}
\usepackage{pdflscape}

\journal{}

\begin{document}

\begin{frontmatter}



\title{Rethinking Adam for Time Series Forecasting: A Simple Heuristic to Improve Optimization under Distribution Shifts} 


\author{Yuze Dong$^{a}$,
        Jinsong Wu$^{a,b,*}$}

\affiliation{$^{a}$organization={School of Artificial Intelligence, Guilin University of Electronic Technology},
            addressline={1 Jinji Road}, 
            city={Guilin},
            postcode={541004},
            state={Guangxi},
            country={China}}

\affiliation{$^{b}$organization={Department of Electrical Engineering, University of Chile},
            addressline={Av. Tupper 2007}, 
            city={Santiago},
            postcode={8370451},
            country={Chile}}

\cortext[*]{Corresponding author:\\
\textit{Email address}: wujs@ieee.org (Jinsong Wu)}

\begin{abstract}

Time-series forecasting often faces challenges from non-stationarity, particularly distributional drift, where the data distribution evolves over time. This dynamic behavior can undermine the effectiveness of adaptive optimizers, such as Adam, which are typically designed for stationary objectives. In this paper, we revisit Adam in the context of non-stationary forecasting and identify that its second-order bias correction limits responsiveness to shifting loss landscapes. To address this, we propose TS\_Adam, a lightweight variant that removes the second-order correction from the learning rate computation. This simple modification improves adaptability to distributional drift while preserving the optimizer core structure and requiring no additional hyperparameters. TS\_Adam integrates easily into existing models and consistently improves performance across long- and short-term forecasting tasks. On the ETT datasets with the MICN model, it achieves an average reduction of 12.8\% in MSE and 5.7\% in MAE compared to Adam. These results underscore the practicality and versatility of TS\_Adam as an effective optimization strategy for real-world forecasting scenarios involving non-stationary data. Code is available at: \url{https://github.com/DD-459-1/TS_Adam}.

\end{abstract}


\begin{highlights}
\item Identifies a key limitation of the Adam optimizer in training neural networks for non-stationary time-series forecasting.

\item Proposes TS\_Adam, a lightweight and efficient optimizer that enhances adaptability to distributional drift by a simple yet effective modification.

\item The method requires no extra hyperparameters and can be easily integrated into existing forecasting models as a drop-in replacement for Adam.

\item Extensive experiments on benchmarks demonstrate consistent improvements across various neural network architectures (e.g., MICN).
\end{highlights}

\begin{keyword}
Time series forecasting, adaptive optimization, non-stationary data, Adam variant, deep learning.



\end{keyword}

\end{frontmatter}




\section{Introduction}
\label{sec:Introduction}

Time series forecasting plays a critical role in various data-driven applications, including prediction of energy consumption, financial market analysis, and climate modeling. Traditional statistical methods, such as ARIMA and exponential smoothing, rely on parametric assumptions and linear models, which often struggle to capture the nonlinear dynamics and evolving temporal patterns of real-world data. In contrast, deep learning approaches have achieved significant improvements in forecast accuracy by learning hierarchical and non-linear representations. Recent studies indicate that such models can reduce prediction errors by 15–20\% compared to classical statistical baselines~\cite{benidis2022deep}\cite{lim2021time}\cite{miller2024survey}\cite{kong2025deep}.

Despite these advancements, time series forecasting remains challenging due to non-stationarity, particularly distributional drift, where the underlying data distribution evolves over time. This dynamic behavior often leads to degradation in model performance, especially over long horizons or under abrupt environmental changes.

Although most recent efforts have focused on addressing non-stationarity through architectural innovations, loss function design, or signal transformations, relatively little attention has been given to the role of the optimizer in this context. Adaptive optimizers such as Adam are widely used in deep learning, but their behavior under nonstationary conditions has not been thoroughly examined. Empirical evidence suggests that Adam’s second-order bias correction, while beneficial in early-stage convergence, can hinder optimizer responsiveness to evolving objectives in non-stationary environments.

In this work, we address the limitations of adaptive optimizers under distributional drift by introducing TS\_Adam, a simple variant of Adam. This modification improves the adaptability of the optimizer to non-stationary objectives while preserving its core structure and requiring no additional hyperparameters. Extensive experiments on long- and short-term forecasting tasks, including the ETT and M4 datasets, demonstrate that TS\_Adam consistently outperforms standard optimizers. For example, when applied to the MICN model on the ETT datasets, TS\_Adam achieves an average reduction of 12.8\% in MSE and 5.7\% in MAE compared to Adam.

In summary, this paper makes the following key contributions:

\begin{itemize}
    \item We identify a gap in addressing non-stationarity from the optimization perspective and analyze the limitations of Adam under distributional drift.
    \item We propose TS\_Adam, a lightweight variant that improves adaptability in time series forecasting with minimal implementation cost.
    \item We validate the effectiveness of TS\_Adam through extensive experiments in multiple datasets and model architectures.
\end{itemize}

\section{Related works}
\label{sec:Related work}
Non-stationarity in time series, characterized by changing statistical properties such as mean, variance, and autocorrelation, poses significant challenges to model generalization. To mitigate these effects, recent deep learning research has primarily advanced in three directions: architectural design, loss function modification, and signal transformation.

One line of research centers on architectural innovations that aim to improve temporal adaptability. For example, PatchTST~\cite{nie2022time} introduces a fixed-length patching mechanism to emphasize locally stationary segments. MICN~\cite{wang2023micn} utilizes multibranch convolutional layers to separately model short-term and long-term temporal components. SegRNN~\cite{lin2023segrnn} applies segment-wise recurrence to reduce cumulative error in long-sequence modeling. In addition, models such as iTransformer~\cite{liu2023itransformer}, TimeXer~\cite{wang2024timexer}, TimesNet~\cite{wu2022timesnet} incorporate various structural mechanisms to better accommodate distributional shifts, GinAR+~\cite{yu2025ginar+} introduces an end-to-end framework that jointly addresses missing value imputation and spatial correlation reconstruction in multivariate time series via interpolation attention and adaptive graph convolution, MGSFformer~\cite{yu2025mgsfformer} leverages a multi-granularity fusion architecture to effectively integrate heterogeneous data from different sampling intervals for air quality prediction, and BasicTS+~\cite{shao2024exploring} establishes a unified benchmarking framework and, through a novel analysis of dataset heterogeneity, provides a crucial explanation for the seemingly contradictory findings in multivariate time series forecasting (MTSF) research.

Another stream of work focuses on enhancing robustness by refining the loss function. RobustTSF~\cite{cheng2024robusttsf} introduces trend-aware anomaly scoring to selectively filter training samples, while Hounie et al.~\cite{hounie2024loss} propose loss-shaping constraints that mitigate error propagation over extended forecast horizons.

A third approach involves transforming the input signal to reduce non-stationarity prior to modeling. For instance, DERITS~\cite{fan2024deep} applies a frequency derivative transformation to suppress low-frequency trend components, and FAN~\cite{ye2024frequency} implements adaptive normalization in the frequency domain to stabilize inputs across diverse model architectures.

Although these approaches have advanced forecasting performance, they focus primarily on model architecture or input representation. In contrast, the optimization process, particularly the behavior of adaptive optimizers under non-stationary conditions, has received comparatively limited attention. Although a few studies in other domains, such as reinforcement learning, have explored modifications to the Adam algorithm to improve responsiveness to dynamic objectives~\cite{asadi2023resetting, ellis2024adam}, such adaptations remain rare in time series forecasting. Commonly used optimizers such as Adam and AdamW are still applied without modification, despite the inherent non-stationarity of the task.

Despite increasing interest in handling non-stationarity, relatively little attention has been devoted to understanding the role of optimizers in this context. This study aims to bridge this gap by exploring the interaction between optimization dynamics and distributional drift in forecasting.
\section{Impact of Temporal Non-Stationarity on Optimization}
\label{sec:Impact of Temporal NonStationarity on Optimization}

Building on the empirical and architectural challenges reviewed above, we now turn to the theoretical underpinnings of optimization under non-stationary time series settings.

\subsection{Time-Dependence as a Key Driver of Temporal Non-Stationarity}

To simplify the analysis, we focus on the univariate case, which more clearly illustrates the core structural dynamics. We adopt the classical seasonal trend decomposition using Loess (STL) and model the time series $\{Y_t\}_{t=1}^{T}$ as follows. The subsequent analysis applies similarly to the multivariate setting.
\begin{equation}
    Y_t \;=\; T_t \;+\; S_t \;+\; R_t, \qquad t=1,\dots,T,
\end{equation}
where $T_t$ denotes the trend component and satisfies $T_t = T_{t-1} + \varepsilon_t$ for $t = 2,\dots,T$, with $T_1 = c \in \mathbb{R}$ and $\varepsilon_t \overset{\text{i.i.d.}}{\sim} \mathcal{N}(0, \sigma_\varepsilon^2)$; we assume that $\sigma_\varepsilon^2$ is bounded to reflect the phenomenon that real-world trends typically evolve smoothly over time. $S_t$ is the seasonal component, satisfying $S_{t+P} = S_t$ (where $P \in \mathbb{N}^+$ is the seasonal period) and $\sum_{i=1}^P S_i = 0$, where the zero-sum constraint ensures identifiability and separation from the trend. We do not impose continuity or smoothness constraints on $S_t$, as seasonal patterns in the real world often exhibit locally fast variations and abrupt changes. $R_t$ is the remaining component with $R_t \overset{\text{i.i.d.}}{\sim} \mathcal{N}(0, \sigma_R^2)$. The sequences ${\varepsilon_t}$ and ${R_t}$ are mutually independent and both independent of ${S_t}$. This decomposition and its assumptions follow standard practice and align well with the empirical properties of seasonal time series.

\begin{theorem}\label{thm:timevarying}
Under the above decomposition, each observation $Y_t$ follows
a univariate Gaussian distribution whose mean and variance are explicit
functions of time~$t$:
\[
    Y_t \sim \mathcal{N}\bigl(\mu(t),\, \sigma^{2}(t)\bigr),
\]
\[
    \text{where} \quad
    \mu(t) = c + S_t,\quad
    \sigma^2(t) = (t-1)\sigma_\varepsilon^2 + \sigma_R^2.
\]

Consequently, both the first and second moments vary with $t$, rendering the process intrinsically \emph{nonstationary} with an explicitly time-dependent structure.

\end{theorem}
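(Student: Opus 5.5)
The plan is to unroll the random-walk recursion for the trend, write $Y_t$ as a deterministic quantity plus a sum of independent Gaussians, and then read off the mean and variance.

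\textbf{Unrolling the trend.} Iterating $T_t = T_{t-1} + \varepsilon_t$ from $T_1 = c$ gives, for every $t \ge 1$,
\[
T_t = c + \sum_{k=2}^{t} \varepsilon_k ,
\]
with the empty sum for $t=1$. A one-line induction on $t$ justifies this. Substituting into the decomposition yields
\[
Y_t = (c + S_t) + \sum_{k=2}^{t}\varepsilon_k + R_t .
\]

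\textbf{Separating deterministic and random parts.} The seasonal sequence $S_t$ is treated as a fixed deterministic periodic sequence. The model assumes it is independent of the noise and imposes no randomness on it, so I will state explicitly that $S_t$ is deterministic, or that we condition on it. Then $c + S_t$ is a constant at each fixed $t$. The remaining term is a finite linear combination of the mutually independent Gaussian variables $\varepsilon_2,\dots,\varepsilon_t,R_t$. A sum of independent Gaussians is Gaussian, which can be confirmed via characteristic functions if desired, so $Y_t$ is univariate Gaussian.

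\textbf{Computing the moments.} By linearity of expectation and the zero means of the $\varepsilon_k$ and $R_t$, we get $\mathbb{E}[Y_t] = c + S_t$. By independence, the variance is additive over the $t-1$ trend increments and the one residual term, giving
\[
\operatorname{Var}(Y_t) = (t-1)\sigma_\varepsilon^2 + \sigma_R^2 .
\]

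\textbf{Non-stationarity and the main subtlety.} The final claim needs more than the formulas. The variance is strictly increasing in $t$ whenever $\sigma_\varepsilon^2 > 0$. The mean varies with $t$ through $S_t$ unless the seasonal pattern is identically zero. Either fact rules out weak stationarity. The main care point is this caveat: "both moments vary" needs $\sigma_\varepsilon>0$ and $S$ non-constant, which I would state explicitly.
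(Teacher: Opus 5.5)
Your proposal is correct and follows the paper's own proof: write $T_t = c + \sum_{i=2}^{t}\varepsilon_i$, use linearity for the mean, use independence (with $S_t$ deterministic) for the variance, and use closure of independent Gaussian sums for the distribution. The paper leaves three points implicit that you state explicitly: the unrolling of the trend, that $S_t$ is treated as deterministic, and the conditions $\sigma_\varepsilon > 0$ and $S$ not identically zero, which are needed for both moments to actually vary with $t$.
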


\begin{proof}
See Appendix~A.
\end{proof}

Based on the above analysis, the time-varying nature of the series emerges as a key factor contributing to its non-stationarity. As a result, any forecasting model must approximate a high-dimensional function that evolves over time to capture the continuously changing statistical characteristics of the data. The forecasting task is therefore not about fitting a fixed input-output relation but about modeling a complex time-varying mapping.

\subsection{Limitations of Adam in Learning Time-Varying Functions}

We next analyze the behavior of Adam when learning time-varying objective functions. To this end, we adopt dynamic regret bounds, a standard tool widely used to evaluate optimization performance \cite{zhang2018adaptive}.

The time-varying nature established in Theorem~\ref{thm:timevarying} implies that optimal forecasting parameters must also evolve to track the changing data distribution. To analyze this formally, we adopt the \emph{dynamic regret} framework, which measures the cumulative loss relative to a sequence of time-varying comparators. We consider the parameters of the model $\theta \in \mathbb{R}^d$ and let $f_t(\theta)$ denote the forecast loss at time $t$. The dynamic regret is $R(T) = \sum_{t=1}^T [f_t(\theta_t) - f_t(\theta_t^*)]$, where $\theta_t$ are the parameters of the algorithm, and $\theta_t^* \in \arg\min_{\theta} f_t(\theta)$ are the per-round minimizers.

\begin{theorem}[Dynamic Regret Bound]
Assume that each objective function $f_t(\theta)$ is convex, the gradients are uniformly bounded as $\|\nabla f_t(\theta)\| \leq G$ for all $\theta$ and $t$, the per-round minimizers satisfy $\|\theta_{t+1}^* - \theta_t^*\| \leq \delta$ for all $t$. Let $\{\eta_t\}_{t=1}^T$ denote the sequence of effective learning rates, which is determined by a base learning rate $\alpha$ and the adaptive scaling mechanisms of the optimizer (such as first- or second-order moment estimates and bias corrections). Then, the dynamic regret satisfies:

\begin{equation}
\label{eq:regret_bound}
R(T)
\leq 
\underbrace{\frac{1}{2\eta_1} \|\theta_1 - \theta_1^*\|^2}_{\text{initial error}}
+ 
\underbrace{\frac{G^2}{2} \sum_{t=1}^T \eta_t}_{\text{noise term}}
+ 
\underbrace{\delta \sum_{t=1}^T \frac{1}{\eta_t}}_{\text{drift term}}
+
\underbrace{ G\delta T - \frac{\delta^2 T}{2}}_{\text{Drift-Noise Term}}
\end{equation}

This result reveals a trade-off in designing the effective learning rate $\eta_t = \alpha \cdot \eta_t^{\mathrm{eff}}$, where $\alpha$ is the base learning rate (typically fixed or updated once per epoch), and $\eta_t^{\mathrm{eff}}$, known as the \emph{step size modulation term}, denotes the optimizer’s rescaling factor independent of gradient magnitudes (e.g., bias correction). This term governs the balance between the noise and drift terms in the regret bound.

Choosing $\eta_t^{\mathrm{eff}} > 1$ tends to reduce the impact of dynamic variation (captured by the drift term), while $\eta_t^{\mathrm{eff}} < 1$ helps suppress the noise term. The value $\eta_t^{\mathrm{eff}} \approx 1$ serves as an intuitive reference point between the two effects.

\end{theorem}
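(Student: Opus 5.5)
We use the standard online-gradient-descent argument with a moving comparator. Throughout, we take the update to be $\theta_{t+1} = \theta_t - \eta_t g_t$ with $g_t = \nabla f_t(\theta_t)$. For Adam-type methods we absorb the adaptive preconditioner into the scalar effective step $\eta_t$, consistent with how the statement defines $\eta_t$. By convexity, $f_t(\theta_t) - f_t(\theta_t^*) \le \langle g_t, \theta_t - \theta_t^*\rangle$, so it suffices to bound $\sum_t \langle g_t, \theta_t - \theta_t^*\rangle$. Expanding the update gives the identity
\[
\langle g_t, \theta_t - \theta_t^*\rangle = \frac{\|\theta_t-\theta_t^*\|^2 - \|\theta_{t+1}-\theta_t^*\|^2}{2\eta_t} + \frac{\eta_t}{2}\|g_t\|^2 .
\]
The second piece, with $\|g_t\|\le G$, sums directly to the noise term $\frac{G^2}{2}\sum_t \eta_t$.

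The remaining work is to relate $\|\theta_{t+1}-\theta_t^*\|^2$ to $\|\theta_{t+1}-\theta_{t+1}^*\|^2$ so that the first piece telescopes. Write $\theta_{t+1}-\theta_t^* = (\theta_{t+1}-\theta_{t+1}^*) + (\theta_{t+1}^*-\theta_t^*)$ and expand the square:
\[
\|\theta_{t+1}-\theta_t^*\|^2 = \|\theta_{t+1}-\theta_{t+1}^*\|^2 + 2\langle \theta_{t+1}-\theta_{t+1}^*, \theta_{t+1}^*-\theta_t^*\rangle + \|\theta_{t+1}^*-\theta_t^*\|^2 .
\]
Substituting this back produces three kinds of contribution.
\begin{itemize}
\item The telescoping differences $\frac{1}{2\eta_t}\bigl(\|\theta_t-\theta_t^*\|^2 - \|\theta_{t+1}-\theta_{t+1}^*\|^2\bigr)$. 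For nonincreasing $\eta_t$ these sum, after Abel summation, to the initial-error term $\frac{1}{2\eta_1}\|\theta_1-\theta_1^*\|^2$ plus correction terms $\bigl(\frac{1}{2\eta_{t}}-\frac{1}{2\eta_{t-1}}\bigr)\|\theta_t-\theta_t^*\|^2$.
\item Cross terms of the form $-\frac{1}{\eta_t}\langle \theta_{t+1}-\theta_{t+1}^*, \theta_{t+1}^*-\theta_t^*\rangle$.
\item The terms $-\frac{1}{2\eta_t}\|\theta_{t+1}^*-\theta_t^*\|^2$, which are nonpositive.
\end{itemize}

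The cross terms should give the drift term $\delta\sum_t 1/\eta_t$. Bounding them needs a uniform bound on $\|\theta_{t+1}-\theta_{t+1}^*\|$, which the hypotheses do not supply. We will either assume a bounded domain of diameter normalised to at most $1$, or otherwise bound the product by $\delta$ times a bounded quantity. The Abel-summation correction terms would likewise need a diameter bound, or monotonicity of $\eta_t$, to be dropped or absorbed into the initial-error term. The last term $G\delta T - \delta^2 T/2$ arises most naturally from an alternative route: split $f_t(\theta_t^{*}) - f_t(\theta_{t+1}^*)$-type comparator shifts using the $G$-Lipschitz property, with $|f_t(\theta_t^*)-f_t(\theta_{t+1}^*)|\le G\delta$ contributing $G\delta$ per round. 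The $-\frac{\delta^2}{2}$ per round would then come from keeping the $\|\theta_{t+1}^*-\theta_t^*\|^2$ term with a normalised step, i.e.\ bounding it as $\ge$ a multiple of $\delta^2$, though this requires $\|\theta_{t+1}^*-\theta_t^*\|$ close to $\delta$.

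The main obstacle is this bookkeeping. The regret identity does not by itself produce exactly the four displayed terms with those constants. Making the cross term equal $\delta/\eta_t$ requires an implicit diameter bound, and the Drift-Noise Term requires a careful Lipschitz argument on the comparator shift. First we would fix the bounded-domain normalisation, where $\|\theta-\theta^*\|\le 1$ on the feasible set, and verify each constant, treating the bound as holding up to this normalisation. The qualitative trade-off discussed after the statement then follows at once: $\sum_t\eta_t$ grows and $\sum_t 1/\eta_t$ shrinks as the modulation factor $\eta_t^{\mathrm{eff}}$ increases, and conversely when it decreases.
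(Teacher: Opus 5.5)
Your core argument, the OGD identity followed by the moving-comparator expansion with the cross term bounded by $\delta$ times a diameter, is the standard one and is sound. Your suspicion about the hypotheses is also correct: the statement is false as written. Without a diameter bound it fails even for constant steps. In one dimension take $f_t(\theta)=G\sqrt{(\theta-\delta t)^2+s^2}$ with small $s>0$, $\theta_1=\theta_1^*$ and $\eta_t\equiv\eta<\delta/G$. The gap to the minimizer grows by at least $\delta-\eta G$ per round, so $R(T)$ grows like $G(\delta-\eta G)T^2/2$, while the right-hand side is linear in $T$.

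The genuine gaps are in how you propose to close the argument.

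First, a diameter bound does \emph{not} let you absorb the Abel-summation corrections into $\frac{1}{2\eta_1}\|\theta_1-\theta_1^*\|^2$ when $\eta_t$ is non-increasing. Take $[-\tfrac12,\tfrac12]$ with $f_t(\theta)=\sqrt{\theta^2+s^2}$ (so $G=1$, $\delta=0$), $\theta_1=\tfrac12$, $\eta_1=1$ and $\eta_t=10^{-3}$ for $t\ge 2$. The first step lands near $-\tfrac12$, so $R(100)>39$, while the right-hand side is below $0.7$. You need $\eta_t$ non-decreasing, or an extra term of order $1/\eta_T$. This matters for the intended use, since TS\_Adam's modulation $1/(1-\beta_1^t)$ is decreasing.

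Second, your route to the Drift--Noise term cannot work. The $-\delta^2/2$ would need a \emph{lower} bound on $\|\theta_{t+1}^*-\theta_t^*\|$, but $\delta$ is only an upper bound. Any larger $\delta$ is also admissible, and for large $\delta$ the stated right-hand side becomes negative while $R(T)\ge 0$.

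The fix is to drop the nonpositive terms $-\frac{1}{2\eta_t}\|\theta_{t+1}^*-\theta_t^*\|^2$. With projected updates on a set of diameter at most $1$ containing the $\theta_t^*$, and non-decreasing $\eta_t$, your decomposition gives
\[
R(T)\le \frac{1}{2\eta_1}\|\theta_1-\theta_1^*\|^2+\frac{G^2}{2}\sum_{t=1}^T\eta_t+\delta\sum_{t=1}^T\frac{1}{\eta_t}.
\]
Since $G\delta T-\delta^2T/2\ge 0$ exactly when $\delta\le 2G$, the displayed bound follows in that regime with the last term as pure slack. No Lipschitz comparator-shift argument is needed.

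The paper's proof does not supply what you are missing. It splits $\theta_t-\theta_t^*=(\theta_t-\theta_{t-1}^*)+(\theta_{t-1}^*-\theta_t^*)$. The second piece gives $\langle\nabla f_t(\theta_t),\theta_{t-1}^*-\theta_t^*\rangle\le G\delta$ by Cauchy--Schwarz, which is the only legitimate source of its $G\delta T$. To convert $\|\theta_{t+1}-\theta_{t-1}^*\|^2$ into $\|\theta_{t+1}-\theta_t^*\|^2$, it then makes three unjustified moves:
\begin{itemize}
\item It replaces $\|\theta_t^*-\theta_{t-1}^*\|^2$ by $\delta^2$ inside a lower bound, the same wrong-direction step as yours.
\item It asserts $\|\theta_{t+1}-\theta_t^*\|\le\eta_t G$, which holds for $\theta_{t+1}-\theta_t$, not for $\theta_{t+1}-\theta_t^*$.
\item It adds the unstated assumption $\eta_t\le 1$.
\end{itemize}
Even granting these, one gets $(G\delta-\delta^2/2)/\eta_t$ per round, not the displayed $\delta/\eta_t+G\delta-\delta^2/2$. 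The mismatch between the weights $1/(2\eta_t)$ and $1/(2\eta_{t+1})$ in the ``telescoping'' is never addressed. Substituting the bound on $(A)$ back should also give $2G\delta T$, not $G\delta T$.

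So your argument, completed with the diameter-$1$, monotone-step and $\delta\le 2G$ assumptions above, is the correct version of this result. The paper's derivation of the extra terms should not be relied on.
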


\begin{proof}
See Appendix~B.
\end{proof}

Empirically, we observe that after several epochs, the loss decreases more slowly, indicating that Adam often enters a stable regime where $m_t$ and $v_t$ evolve gradually and the effect of gradient noise fades. In contrast, the non-stationarity in time series tasks stems from intrinsic data drift and persists throughout training. To formalize this, we assume that: (1) after a brief initial phase, the moment estimates $m_t$ and $v_t$ evolve slowly over time; and (2) the drift from the time-varying objective is persistent.

Adam’s update can be interpreted as the product of the initial learning rate $\alpha$, a step size modulation term $\eta_t^{\mathrm{eff}} = \frac{\sqrt{1 - \beta_2^t}}{1 - \beta_1^t}$, and a gradient adaptation term $\frac{m_t}{\sqrt{v_t}}$, where for simplicity, all the denominators are assumed to be nonzero.

In standard settings (such as $\beta_1 = 0.9$, $\beta_2 = 0.999$), the trajectory of $\eta_t^{\mathrm{eff}}$ is shown in Fig~\ref{fig:step_curve}. In early training, small $\eta_t^{\mathrm{eff}}$ helps reduce the regret of noisy gradients, consistent with the goal of bias correction. However, due to the much slower decay of $\beta_2^t$, the second-order correction dominates longer, causing $\eta_t^{\mathrm{eff}}$ to remain much lower than 1 for an extended period. Although this design stabilizes early updates, it also severely limits the responsiveness of the optimizer to continuous drift in time-varying objectives. Consequently, the inability to suppress regret from such drift leads to notable performance degradation in non-stationary settings.

\section{Proposed method}
\subsection{Design of TS\_Adam}
\begin{figure*}[t]
\centering
\begin{minipage}{0.42\textwidth}
    \raggedleft
    \includegraphics[width=\linewidth]{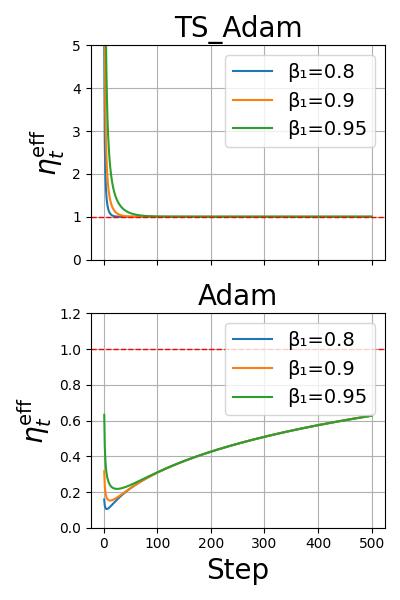}
    \caption{Evolution of the step size modulation term $\eta_t^{\mathrm{eff}}$ with respect to training steps under $\beta_1 \in \{0.8, 0.9, 0.95\}$, for both \textcolor{blue}{TS\_Adam} and \textcolor{red}{Adam}. For visualization purposes, the learning rate $\alpha$ is set to 1. Note that both optimizers asymptotically converge to $\eta_t^{\mathrm{eff}} = \alpha$ during long-term training.}
    \label{fig:step_curve}
\end{minipage}
\hfill
\begin{minipage}{0.56\textwidth}
    \raggedright
    \begin{algorithm}[H]
           \caption{Pseudocode for Time Series Forecasting with \textcolor{red}{Adam} and \textcolor{blue}{TS\_Adam}.}
           \label{algo_adam}
    \begin{algorithmic}[1]
    \REQUIRE Learning rate $\alpha$, exponential decay rates $\beta_1, \beta_2 \in [0,1)$, small constant $\epsilon > 0$
    \REQUIRE Initial parameter vector $\theta_0$
    \STATE Initialize $m_0 = 0, v_0 = 0, t = 0$
    \WHILE{not converged}
        \STATE $t \leftarrow t + 1$
        \STATE $g_t \leftarrow \nabla_\theta f_t(\theta_{t-1})$
        \STATE $m_t \leftarrow \beta_1 m_{t-1} + (1-\beta_1) g_t$
        \STATE $v_t \leftarrow \beta_2 v_{t-1} + (1-\beta_2) g_t^2$
        \STATE $\hat{m}_t \leftarrow m_t / (1-\beta_1^t)$
        \IF{\textcolor{red}{Adam}}
            \STATE $\hat{v}_t \leftarrow v_t / (1-\beta_2^t)$ \COMMENT{Second-order bias correction}
        \ELSE[\textcolor{blue}{TS\_Adam}]
            \STATE $\hat{v}_t \leftarrow v_t$ \COMMENT{No second-order bias correction}
        \ENDIF
        \STATE $\theta_t \leftarrow \theta_{t-1} - \alpha \hat{m}_t / (\sqrt{\hat{v}_t} + \epsilon)$
    \ENDWHILE
    \RETURN $\theta_t$
    \end{algorithmic}
    \end{algorithm}
\end{minipage}
\end{figure*}
To address this issue, we suggest removing the second-order bias correction, allowing $\eta_t^{\mathrm{eff}}$ to remain above 1 during training. Due to the rapid decay of the first-order correction, $\eta_t^{\mathrm{eff}}$ quickly approaches 1, as shown in Fig~\ref{fig:step_curve}, thus maintaining control over the regret induced by gradient noise. In addition, eliminating the second-order correction reduces the computational overhead per iteration, further enhancing the efficiency of the optimizer. Although this adjustment may slightly reduce update stability in the early phase, empirical evidence shows that such an impact is minimal and that the resulting performance gains in non-stationary settings are significant. The pseudocode of TS\_Adam is provided in Algorithm\ref{algo_adam}.

\subsection{Computational and Memory Overhead Analysis.}
As shown in Algorithm~\ref{algo_adam}, TS\_Adam differs from Adam solely by omitting the second-order bias correction (i.e., using $\hat{v}_t \leftarrow v_t$ instead of $\hat{v}_t \leftarrow v_t/(1-\beta_2^t)$). This modification reduces the per-step computational cost by eliminating $n$ division operations, where $n$ is the number of model parameters. Concretely, for a model with $n$ parameters, the approximate floating-point operations (FLOPs) per step are:
\begin{itemize}
\item Adam: $\sim 12n$ FLOPs (comprising $6n$ multiplications, $3n$ additions, $2n$ divisions, and $n$ square roots).
\item TS\_Adam: $\sim 11n$ FLOPs (comprising $6n$ multiplications, $3n$ additions, $n$ divisions, and $n$ square roots).
\end{itemize}
Thus, TS\_Adam achieves a reduction of approximately $8.3\%$ (i.e., $1/12$) in arithmetic operations per training step compared to Adam. In terms of memory consumption, both optimizers maintain identical requirements, needing to store two moment vectors ($m_t$ and $v_t$) of size $n$, which amounts to an extra $2n$ floating-point values (or optimizer states) in addition to the model weights. For a typical model with $n=10^7$ parameters, TS\_Adam saves approximately $10^7$ FLOPs per iteration without introducing any additional memory overhead.

\subsection{Convergence analysis}

The update rule of TS\_Adam only involves the removal of the second-order bias correction and always satisfies the condition $0 \leq \beta_1 \leq \beta_2 < 1$. This adjustment does not violate the assumptions made in the recent convergence analysis of Adam \cite{defossez2020simple, xiao2024adam, li2023convergence}. Therefore, TS\_Adam can provide the following convergence guaranty:

\begin{align} \label{convergence proof}
\mathbb{E}\left[\|\nabla F(x_r)\|^2\right] \leq O\left(\frac{d \ln(N)}{\sqrt{N}}\right)
\end{align}

where \( d \) is the dimension of the problem and \( N \) is the number of iterations. For a detailed proof, readers may refer to the work by ~\cite{defossez2020simple}.

\section{Experiments}
\label{sec:Experiments}
\subsection{Experimental Setup}




To evaluate the effectiveness and generalizability of TS\_Adam, we conduct comparative experiments against several widely used optimizers, including Adam~\cite{kingma2014adam}, AdamW~\cite{loshchilov2017decoupled}, Lookahead~\cite{zhang2019lookahead}, SGD, and Yogi~\cite{zaheer2018adaptive}.

We consider both long-term and short-term forecasting tasks. For long-term forecasting, we use three representative model architectures: MICN, PatchTST, and SegRNN, on the ETT benchmark, including ETTh1, ETTh2, ETTm1, ETTm2, ECL, and Weather datasets. Each model is evaluated in four forecast horizons: 96, 192, 336, and 720 time steps. For short-term forecasting, the same models are applied to the M4 dataset using the official evaluation setup: 48 steps for hourly, 14 for daily, 8 for weekly, 6 for monthly, 5 for quarterly and 2 for yearly series.


All experiments have been conducted on a single NVIDIA RTX 3090 GPU. For the long-term forecasting tasks, we follow the common practice in time series forecasting literature for dataset splitting: the ETT dataset is chronologically divided into 60\% for training, 20\% for validation, and 20\% for testing; the ECL and Weather datasets follow a split of 70\% for training, 20\% for validation, and 10\% for testing. For the short-term forecasting task on the M4 dataset, we strictly adhere to the official train/test split provided by the M4 competition, which is the standard benchmark setting.

To ensure reproducibility, we fix the random-seed sequence to \{123, 2021, 2077\} and run each configuration three times. The reported results are the mean and standard deviation of evaluation metrics (including MSE, MAE, and sMAPE) computed on the test sets over these three independent runs.

We provide complete implementation details to ensure reproducibility. The anonymized code repository contains complete data preprocessing scripts, environment configuration files, the full list of fixed random seeds used across all experiments, a comprehensive evaluation configuration table documenting all hyperparameters, and one-click execution scripts with documented example usage. All required datasets are publicly available.

\subsection{Performance Comparison}

\subsubsection{Long-Term Forecasting}


The ETT (Electricity Transformer Temperature) dataset, along with ECL (Electricity Consumption Load) and Weather, is widely used for time series forecasting benchmarking. Collected from real-world systems, they exhibit periodicity, long-term trends, and distributional shifts. ETT includes hourly (ETTh) and 15-minute (ETTm) subsets from an electricity grid. The ECL records the hourly power consumption, while Weather provides 10-minute meteorological data.
﻿
Due to their practical relevance and temporal complexity, these datasets have been commonly adopted in recent forecasting studies~\cite{nie2022time, liu2023itransformer, wang2024timexer, lin2023segrnn}. All baselines are reproduced under identical settings to ensure fair comparison.

\begin{landscape}  
\begin{table*}[ht]
  \centering
  \caption{
    Performance comparison of different optimizers on long-term forecasting using the four subsets of the ETT dataset, the Weather dataset, and the ECL dataset. 
    Results are averaged over four prediction lengths: \{96, 192, 336, 720\}. Each entry reports the mean and standard deviation across three runs. The best result in each row is marked in bold red, and the second-best is underlined in blue.
    }

  \label{tab:long_time_comparison}
  \resizebox{1.3\textwidth}{!}{
    \begin{tabular}{c|c|ccc|ccc|ccc}
    \toprule

    \multirow{2}{*}{\rotatebox{90}{\footnotesize\textbf{Dataset}}} & 
    \multirow{2}{*}{\footnotesize\textbf{Method}} & 
    \multicolumn{3}{c}{\makecell{\footnotesize Model\_1\\\footnotesize(MICN)}} & 
    \multicolumn{3}{c}{\makecell{\footnotesize Model\_2\\\footnotesize(PatchTST)}} & 
    \multicolumn{3}{c}{\makecell{\footnotesize Model\_3\\\footnotesize(SegRNN)}} \\
    \cmidrule(lr){3-5} \cmidrule(lr){6-8} \cmidrule(l){9-11}
    & & \textbf{\footnotesize MSE} & \textbf{\footnotesize MAE} & \textbf{\footnotesize SMAPE} & \textbf{\footnotesize MSE} & \textbf{\footnotesize MAE} & \textbf{\footnotesize SMAPE} & \textbf{\footnotesize MSE} & \textbf{\footnotesize MAE} & \textbf{\footnotesize SMAPE} \\

    \midrule
    \multirow{6}[2]{*}{\rotatebox{90}{\footnotesize ETTh1}} & TS\_Adam & \textcolor{red}{0.441±0.005} & \textcolor{red}{0.460±0.003} & \textcolor{red}{84.304 ± 0.035} & \textcolor{red}{0.447±0.003} & \textcolor{red}{0.439±0.001} & \textcolor{red}{80.274 ± 0.02} & \textcolor{red}{0.440±0.002} & \textcolor{red}{0.443±0.001} & \textcolor{red}{80.637 ± 0.004} \\
          & Adam  & 0.569±0.023 & 0.528±0.01 & 92.79 ± 0.031 & 0.457±0.002 & 0.445±0.001 & 81.448 ± 0.055 & 0.473±0.003 & 0.462±0.002 & {\textcolor{blue}{\uline{83.604 ± 0.079}}} \\
          & SGD   & 0.940±0.007 & 0.714±0.004 & 127.224 ± 0.062 & 0.899±0.014 & 0.644±0.005 & 104.293 ± 0.001 & 1.330±0.004 & 0.796±0.07 & 106.171 ± 0.054 \\
          & Yogi  & {\textcolor{blue}{\uline{0.457±0.002}}} & {\textcolor{blue}{\uline{0.476±0.001}}} & {\textcolor{blue}{\uline{85.541 ± 0.066}}} & 0.511±0.002 & 0.479±0.001 & 86.056 ± 0.02 & 0.652±0.003 & 0.540±0.001 & 95.296 ± 0.056 \\
          & AdamW & 0.568±0.023 & 0.528±0.01 & 92.785 ± 0.076 & {\textcolor{blue}{\uline{0.457±0.002}}} & {\textcolor{blue}{\uline{0.445±0.001}}} & {\textcolor{blue}{\uline{81.43 ± 0.006}}} & {\textcolor{blue}{\uline{0.473±0.003}}} & {\textcolor{blue}{\uline{0.462±0.002}}} & 83.625 ± 0.006 \\
          & Lookahead & 0.534±0.004 & 0.507±0.002 & 89.516 ± 0.07 & 0.465±0.002 & 0.45±0.002 & 81.751 ± 0.025 & 0.542±0.003 & 0.494±0.001 & 88.836 ± 0.024 \\
    \midrule
    \multirow{6}[2]{*}{\rotatebox{90}{\footnotesize ETTh2}} & TS\_Adam & \textcolor{red}{0.533±0.01} & \textcolor{red}{0.501±0.005} & \textcolor{red}{68.554 ± 0.046} & \textcolor{red}{0.373±0.002} & \textcolor{red}{0.399±0.001} & \textcolor{red}{60.082 ± 0.03} & \textcolor{red}{0.377±0.005} & \textcolor{red}{0.409±0.002} & \textcolor{red}{60.728 ± 0.088} \\
          & Adam  & 0.581±0.027 & 0.526±0.014 & 71.621 ± 0.089 & 0.378±0.005 & 0.403±0.001 & 60.473 ± 0.017 & 0.384±0.002 & 0.411±0.001 & {\textcolor{blue}{\uline{61.808 ± 0.027}}} \\
          & SGD   & 1.480±0.018 & 0.889±0.006 & 109.109 ± 0.047 & 0.469±0.009 & 0.463±0.001 & 67.796 ± 0.02 & 0.557±0.002 & 0.511±0.002 & 72.308 ± 0.07 \\
          & Yogi  & {\textcolor{blue}{\uline{0.572±0.002}}} & {\textcolor{blue}{\uline{0.522±0.001}}} & 70.715 ± 0.021 & 0.405±0.003 & 0.421±0.001 & 63.043 ± 0.043 & 0.438±0.002 & 0.445±0.001 & 64.423 ± 0.088 \\
          & AdamW & 0.581±0.027 & 0.526±0.014 & 71.576 ± 0.002 & {\textcolor{blue}{\uline{0.380±0.001}}} & {\textcolor{blue}{\uline{0.403±0.001}}} & {\textcolor{blue}{\uline{60.458 ± 0.078}}} & {\textcolor{blue}{\uline{0.384±0.002}}} & {\textcolor{blue}{\uline{0.411±0.001}}} & 61.835 ± 0.062 \\
          & Lookahead & 0.575±0.014 & 0.523±0.006 & {\textcolor{blue}{\uline{70.552 ± 0.018}}} & 0.386±0.001 & 0.407±4.3e-4 & 60.955 ± 0.065 & 0.395±0.001 & 0.418±0.001 & 63.44 ± 0.016 \\
    \midrule
    \multirow{6}[2]{*}{\rotatebox{90}{\footnotesize ETTm1}} & TS\_Adam & \textcolor{red}{0.382±0.004} & \textcolor{red}{0.411±0.003} & {\textcolor{blue}{\uline{80.193 ± 0.075}}} & \textcolor{red}{0.393±0.001} & \textcolor{red}{0.399±0.001} & 75.717 ± 0.06 & \textcolor{red}{0.447±0.003} & \textcolor{red}{0.440±0.001} & \textcolor{red}{81.772 ± 0.073} \\
          & Adam  & 0.407±0.011 & 0.432±0.008 & 83.561 ± 0.079 & 0.397±0.001 & 0.401±0.001 & {\textcolor{blue}{\uline{75.395 ± 0.013}}} & 0.470±0.003 & 0.449±0.001 & 83.279 ± 0.024 \\
          & SGD   & 0.833±0.002 & 0.654±0.001 & 125.346 ± 0.068 & 0.816±0.034 & 0.602±0.003 & 99.61 ± 0.033 & 1.270±0.004 & 0.708±0.002 & 102.375 ± 0.085 \\
          & Yogi  & {\textcolor{blue}{\uline{0.393±0.001}}} & {\textcolor{blue}{\uline{0.412±0.001}}} & \textcolor{red}{78.41 ± 0.094} & 0.424±0.001 & 0.412±0.001 & 76.807 ± 0.078 & 0.679±0.001 & 0.542±0.001 & 97.371 ± 0.029 \\
          & AdamW & 0.407±0.011 & 0.432±0.008 & 83.561 ± 0.029 & {\textcolor{blue}{\uline{0.397±8.5e-4}}} & {\textcolor{blue}{\uline{0.401±0.001}}} & 75.391 ± 0.022 & {\textcolor{blue}{\uline{0.470±0.003}}} & {\textcolor{blue}{\uline{0.449±0.001}}} & {\textcolor{blue}{\uline{83.267 ± 0.089}}} \\
          & Lookahead & 0.399±0.004 & 0.427±0.003 & 82.733 ± 0.064 & 0.405±0.001 & 0.401±3.9e-4 & \textcolor{red}{75.076 ± 0.046} & 0.547±0.006 & 0.482±0.003 & 88.423 ± 0.026 \\
    \midrule
    \multirow{6}[2]{*}{\rotatebox{90}{\footnotesize ETTm2}} & TS\_Adam & \textcolor{red}{0.346±0.009} & \textcolor{red}{0.394±0.006} & {\textcolor{blue}{\uline{57.321 ± 0.068}}} & \textcolor{red}{0.282±4.8e-4} & {\textcolor{blue}{\uline{0.328±0.001}}} & \textcolor{red}{51.248 ± 0.1} & \textcolor{red}{0.288±0.001} & \textcolor{red}{0.337±0.001} & \textcolor{red}{53.195 ± 0.028} \\
          & Adam  & 0.352±0.009 & 0.395±0.006 & 57.435 ± 0.032 & 0.283±4.2e-4 & 0.328±3.4e-4 & 51.392 ± 0.008 & 0.291±0.001 & 0.339±0.001 & 53.742 ± 0.068 \\
          & SGD   & 1.190±0.006 & 0.772±0.002 & 98.354 ± 0.099 & 0.353±0.001 & 0.381±0.001 & 58.092 ± 0.038 & 0.402±0.002 & 0.427±0.002 & 64.801 ± 0.096 \\
          & Yogi  & 0.366±0.005 & 0.411±0.003 & 59.16 ± 0.091 & 0.293±3.5e-4 & 0.335±2.9e-4 & 52.24 ± 0.043 & 0.302±2.2e-4 & 0.348±2.6e-4 & 55.729 ± 0.037 \\
          & AdamW & {\textcolor{blue}{\uline{0.352±0.009}}} & {\textcolor{blue}{\uline{0.395±0.006}}} & 57.432 ± 0.044 & {\textcolor{blue}{\uline{0.283±4.1e-4}}} & \textcolor{red}{0.328±3.3e-4} & {\textcolor{blue}{\uline{51.347 ± 0.052}}} & {\textcolor{blue}{\uline{0.291±9.3e-4}}} & {\textcolor{blue}{\uline{0.339±0.001}}} & {\textcolor{blue}{\uline{53.725 ± 0.038}}} \\
          & Lookahead & 0.354±0.006 & 0.396±0.004 & \textcolor{red}{57.189 ± 0.051} & 0.284±3.4e-4 & 0.329±2.6e-4 & 51.581 ± 0.077 & 0.296±2.4e-4 & 0.343±4.3e-4 & 54.632 ± 0.001 \\
    \midrule
    \multirow{6}[2]{*}{\rotatebox{90}{\footnotesize ECL}} & TS\_Adam & \textcolor{red}{0.194±5.8e-4} & \textcolor{red}{0.303±1.7e-4} & \textcolor{red}{57.33 ± 0.026} & {\textcolor{blue}{\uline{0.222±2.1e-4}}} & {\textcolor{blue}{\uline{0.309±8.2e-4}}} & 56.945 ± 0.024 & \textcolor{red}{0.214±3.1e-4} & \textcolor{red}{0.299±4.1e-4} & \textcolor{red}{56.34 ± 0.053} \\
          & Adam  & 0.195±0.003 & 0.307±0.001 & {\textcolor{blue}{\uline{58.115 ± 0.002}}} & 0.225±0.001 & 0.312±0.001 & \textcolor{red}{56.574 ± 0.021} & {\textcolor{blue}{\uline{0.227±7.3e-4}}} & {\textcolor{blue}{\uline{0.309±2.6e-4}}} & {\textcolor{blue}{\uline{57.78 ± 0.017}}} \\
          & SGD   & 1.218±0.009 & 0.893±0.001 & 140.115 ± 0.081 & 0.586±0.003 & 0.581±0.001 & 102.234 ± 0.071 & 1.512±0.003 & 0.936±0.002 & 123.649 ± 0.014 \\
          & Yogi  & 0.301±4.2e-4 & 0.397±1.2e-4 & 69.599 ± 0.062 & 0.223±0.001 & 0.308±8.7e-4 & 57.981 ± 0.075 & 0.288±3.5e-4 & 0.369±1.6e-4 & 67.208 ± 0.09 \\
          & AdamW & {\textcolor{blue}{\uline{0.195±0.003}}} & {\textcolor{blue}{\uline{0.307±8.7e-4}}} & 58.143 ± 0.006 & 0.225±8.4e-4 & 0.312±4.6e-4 & {\textcolor{blue}{\uline{56.586 ± 0.02}}} & 0.227±7.1e-4 & 0.309±3.5e-4 & 57.79 ± 0.037 \\
          & Lookahead & 0.229±2.8e-4 & 0.334±3.8e-4 & 62.176 ± 0.073 & \textcolor{red}{0.213±6.2e-4} & \textcolor{red}{0.301±2.9e-4} & 56.911 ± 0.079 & 0.238±2.4e-4 & 0.320±3.4e-4 & 59.552 ± 0.065 \\
    \midrule
    \multirow{6}[2]{*}{\rotatebox{90}{\footnotesize Weather}} & TS\_Adam & \textcolor{red}{0.263±1.9e-4} & \textcolor{red}{0.311±7.1e-4} & \textcolor{red}{76.083 ± 0.016} & \textcolor{red}{0.254±4.7e-4} & \textcolor{red}{0.275±4.9e-4} & \textcolor{red}{63.057 ± 0.092} & \textcolor{red}{0.251±4.4e-4} & \textcolor{red}{0.297±7e-4} & 80.688 ± 0.097 \\
          & Adam  & 0.268±5.8e-4 & 0.319±0.001 & 77.613 ± 0.047 & 0.257±2.5e-4 & 0.279±1.3e-4 & 64.133 ± 0.041 & 0.252±3.8e-4 & 0.299±5.6e-4 & 81.671 ± 0.045 \\
          & SGD   & 0.604±0.01 & 0.592±0.008 & 125.428 ± 0.089 & 0.302±3.4e-4 & 0.319±2.0e-4 & 69.494 ± 0.053 & 0.369±9.7e-4 & 0.359±0.001 & 81 ± 0.002 \\
          & Yogi  & 0.272±2.5e-4 & 0.328±4.9e-4 & 81.87 ± 0.068 & 0.270±0.001 & 0.287±9.4e-4 & 64.429 ± 0.033 & 0.274±4.9e-4 & 0.318±6.8e-4 & \textcolor{red}{76.61 ± 0.044} \\
          & AdamW & {\textcolor{blue}{\uline{0.268±2.7e-4}}} & {\textcolor{blue}{\uline{0.319±0.001}}} & {\textcolor{blue}{\uline{77.59 ± 0.043}}} & {\textcolor{blue}{\uline{0.257±2.1e-4}}} & {\textcolor{blue}{\uline{0.279±1e-4}}} & 64.587 ± 0.073 & {\textcolor{blue}{\uline{0.252±3.8e-4}}} & {\textcolor{blue}{\uline{0.299±4.7e-4}}} & 81.679 ± 0.014 \\
          & Lookahead & 0.270±2.2e-4 & 0.324±4.9e-4 & 79.114 ± 0.027 & 0.259±4.2e-4 & 0.281±2.3e-4 & {\textcolor{blue}{\uline{64.115 ± 0.057}}} & 0.254±1.6e-4 & 0.300±1.2e-4 & {\textcolor{blue}{\uline{80.407 ± 0.024}}} \\
    \bottomrule
    \end{tabular}%
    }
\end{table*}%
\end{landscape}  

\begin{table}[h]
  \centering
  \caption{Variance contributions from STL decomposition of the ETT, ECL and Weather datasets. Note: contributions may not sum to one due to non-orthogonality.}
  
  \label{tab:mer base}
  \resizebox{0.7\textwidth}{!}{
    \begin{tabular}{c|cccccc}
    \toprule
          & ETTh1 & ETTh2 & ETTm1 & ETTm2 & ECL & Weather \\
    \midrule
    Trend & 59.3\% & 88.9\% & 59.1\% & 88.5\% & 20.9\% & 51.0\%\\
    Seasonal & 32.1\% & 5.0\% & 31.9\% & 5.0\% & 73.3\% & 27.4\%\\
    Residual & 15.5\% & 6.8\% & 15.5\% & 6.8\% & 11.4\% & 28.3\%\\
    \bottomrule
    \end{tabular}%
    }
\end{table}%



Table~\ref{tab:long_time_comparison} reports the long-term forecast results across the ETT, ECL, and Weather benchmarks. TS\_Adam consistently achieves lower prediction errors than all baselines in most model–dataset combinations. On the four datasets with comparable volumes (ETTm1, ETTm2, ECL, and Weather), it yields an average relative reduction over Adam of 3.6\% in MSE and 2.2\% in MAE. The reductions are most pronounced on ETTm1 (4.0\% MSE, 2.5\% MAE) and ECL (3.1\% MSE, 2.2\% MAE), and more modest on ETTm2 (1.0\% MSE, 0.3\% MAE) and Weather (1.2\% MSE, 1.5\% MAE).

The performance improvement is even more substantial on the hourly-sampled ETT datasets (ETTh1 and ETTh2), which contain fewer observations. In ETTh1, TS\_Adam provides a remarkable relative reduction of 10.5\% in MSE and 6.1\% in MAE compared to Adam, while in ETTh2 the improvements are 3.8\% in MSE and 2.1\% in MAE. Although the absolute differences may appear modest, they are substantial given that the underlying forecasting models already represent state-of-the-art architectures with strong generalization capabilities.

Furthermore, to provide a scale-independent evaluation of relative forecast errors, we report the Symmetric Mean Absolute Percentage Error (SMAPE). As presented in the Table~\ref{tab:long_time_comparison}, TS\_Adam achieves an average reduction in SMAPE of 1.5\% compared to Adam across the same datasets, with the most pronounced improvements observed on ETT (2.3\%) and Weather (1.3\%). This consistent superiority across MSE, MAE, and SMAPE—metrics that emphasize squared errors, absolute errors, and relative percentage errors, respectively—demonstrates that the performance gains of TS\_Adam are robust and not dependent on any particular error measure. These results underscore the effectiveness of TS\_Adam in enhancing overall forecast accuracy.


To explain the observed performance patterns and quantify non-stationarity, we leverage the STL decomposition results (Table~\ref{tab:mer base}) alongside our theoretical framework. Among the four larger-scale datasets (ETTm1, ETTm2, ECL, and Weather), ECL and ETTm1 show the strongest seasonal contributions (73.3\% and 31.9\%, respectively) coupled with lower residual components (11.4\% and 15.5\%). This structure aligns with the greater performance gains of TS\_Adam on these datasets, consistent with our theoretical analysis: Theorem~1 posits that the seasonal component induces rapid distributional shifts, and Theorem~2 indicates that such faster drift ($\delta$) amplifies dynamic regret. As TS\_Adam is designed to suppress regret accumulation under distributional drift while relaxing the suppression of stochastic noise, its effectiveness is most pronounced when seasonal strength is prominent and the residual proportion is small—thereby establishing a quantifiable link between optimizer performance and non-stationarity, characterized by seasonal and residual variance.

The particularly strong gains on ETTh1 and ETTh2 are attributable to their smaller dataset sizes (hourly sampling over shorter durations), where limited training data amplifies the advantage of an optimizer adept at adapting to non-stationary patterns. This contrast further underscores that, under sufficient data volume, the superiority of TS\_Adam is meaningfully correlated with the seasonal variance metric, validating its design principle.

\begin{table}[h]
  \centering
  \caption{Pairwise comparison p-values (MSE / MAE) between TS\_Adam and baselines. Asterisks indicate significance in pairwise t-tests with Bonferroni correction ($p < 0.05$). The "Sig.Wins" column reports the number of datasets (out of 6) where TS\_Adam shows statistically superior performance over the corresponding baseline for MSE (before the slash) and MAE (after the slash), respectively. For example, "3/3" means TS\_Adam significantly outperforms the baseline on 3 datasets according to MSE and on 3 datasets according to MAE. {Note: LA = Lookahead.}}
  
  \label{tab:t_test}
  \resizebox{1\textwidth}{!}{
  \setlength{\tabcolsep}{0.1pt}
  \renewcommand{\arraystretch}{1.3}  
  \huge
   \begin{tabular}{c|cccccc|c}
    \toprule
    Comp. & ETTh1 & ETTh2 & ETTm1 & ETTm2 & ECL   & Weahter & Sig.Wins \\
    \midrule
    vs Adam & {0.0142/0.0129} & 0.025/0.0342 & 0.0013*/0.0094* & 0.011/0.0752 & 0.0039*/<0.0001* & 0.0008*/0.0011* & 3/3 \\
    vs SGD & <0.0001*/<0.0001* & 0.017/0.007* & <0.0001*/<0.0001* & 0.0253/0.0095 & 0.0002*/<0.0001* & 0.0051*/0.0096 & 4/4 \\
    vs Yogi & 0.0109/0.0027* & <0.0001*/<0.0001* & 0.0326/0.0416 & <0.0001*/<0.0001* & 0.0047*/0.0051* & <0.0001*/<0.0001* & 4/5 \\
    vs AdamW & 0.0141/0.0129 & 0.0189/0.0346 & 0.0013*/0.0093 & 0.011/0.0761 & 0.004*/<0.0001* & 0.0008*/0.0011* & 3/2 \\
    vs LA & 0.0007*/0.0005* & 0.001*/0.0011* & 0.0174/0.0104 & 0.0025*/0.0194 & 0.0358/0.0332 & 0.0001*/0.001* & 4/3 \\
    \bottomrule
    \end{tabular}%
    }
\end{table}%

The statistical significance Table~\ref{tab:t_test} confirm that the performance improvements of TS\_Adam are not due to random chance. TS\_Adam demonstrates statistically significant superiority over all baseline optimizers across the majority of datasets and evaluation metrics, as indicated by the high number of significant wins (often 3 or more out of 6 datasets per comparator) after strict Bonferroni correction for multiple comparisons. This provides strong statistical evidence for the robustness and reliability of the observed experimental results.

\subsubsection{Short-Term Forecasting}
To evaluate optimizer performance in short-term forecasting scenarios, we use the M4 dataset, a large-scale and diverse benchmark comprising 100{,}000 real-world time series from domains such as finance, economics, industry, and demography. The series are categorized into six frequency groups: yearly, quarterly, monthly, weekly, daily, and hourly, each associated with predefined forecast horizons. This setting supports robust evaluation across a range of time scales.

The M4 dataset, with its broad domain coverage and industrial relevance, has become a standard benchmark for short-term forecasting. and recent state-of-the-art models report results on this dataset~\cite{wu2022timesnet}. Its diversity and scale enable comprehensive evaluation of model generalization and performance.

\begin{table*}[ht]
  \centering
  \caption{
    Performance comparison of different optimizers on short-term forecasting using the M4 dataset. Results are reported on six frequency-based subsets following the official evaluation protocol of the M4 competition. Each entry presents the mean and standard deviation of forecasting errors. The best result in each row is marked in bold red, and the second-best is underlined in blue.
    }
  \label{tab:short_term}
  \resizebox{1\textwidth}{0.65\height}{
    \begin{tabular}{c|ccc|ccc|ccc}
    \toprule
    \multirow{2}[4]{*}{{Method}} & \multicolumn{3}{c|}{Model\_1(MICN)} & \multicolumn{3}{c|}{Model\_2(PatchTST)} & \multicolumn{3}{c}{model\_3(SegRNN)} \\
\cmidrule{2-10}          & \multicolumn{1}{c|}{SMAPE} & \multicolumn{1}{c|}{MASE} & OWA   & \multicolumn{1}{c|}{SMAPE} & \multicolumn{1}{c|}{MASE} & OWA   & \multicolumn{1}{c|}{SMAPE} & \multicolumn{1}{c|}{MASE} & OWA \\
    \midrule
    TS\_Adam & \textcolor{blue}{\uline{15.451±0.017}} & {\textcolor{red}{2.528±0.004}} & {\textcolor{red}{1.231±0.002}} & {\textcolor{red}{12.813±0.028}} & {\textcolor{red}{1.780±0.009}} & {\textcolor{red}{0.937±0.003}} & {\textcolor{red}{13.960±0.003}} & {\textcolor{red}{1.967±0.001}} & {\textcolor{red}{1.029±0.001}} \\
    Adam  & 16.524±0.006 & 3.167±0.001 & 1.437±0.001 & \textcolor{blue}{\uline{13.943±0.071}} & \textcolor{blue}{\uline{2.092±0.027}} & \textcolor{blue}{\uline{1.061±0.01}} & \textcolor{blue}{\uline{14.021±0.003}} & \textcolor{blue}{\uline{1.997±0.001}} & \textcolor{blue}{\uline{1.039±0.001}} \\
    SGD   & {\textcolor{red}{15.165±0.008}} & \textcolor{blue}{\uline{2.637±0.005}} & \textcolor{blue}{\uline{1.248±0.001}} & 14.911±0.062 & 2.322±0.027 & 1.157±0.009 & 14.208±0.001 & 2.044±0.001 & 1.058±0.001 \\
    Yogi  & 16.913±0.006 & 3.263±0.002 & 1.477±0.001 & 14.319±0.075 & 2.178±0.025 & 1.097±0.009 & 14.171±0.001 & 2.031±0.001 & 1.053±0.001 \\
    AdamW & 16.524±0.006 & 3.166±0.002 & 1.437±0.001 & 13.946±0.072 & 2.094±0.027 & 1.062±0.01 & 14.023±0.002 & 1.997±0.001 & 1.039±0.001 \\
    Lookahead & 18.748±0.007 & 5.518±0.007 & 2.134±0.002 & 15.415±0.171 & 2.444±0.032 & 1.207±0.015 & 14.115±0.001 & 2.018±0.001 & 1.048±0.001 \\
    \bottomrule
    \end{tabular}%
    }
\end{table*}%

We adopt the official evaluation metrics from the M4 competition: symmetric mean absolute percentage error (SMAPE), mean absolute scaled error (MASE), and overall weighted average (OWA). As shown in Table~\ref{tab:short_term}, TS\_Adam consistently achieves lower errors than Adam across all three metrics. On average, it achieves relative reductions of 5.0\% in SMAPE, 12.2\% in MASE, and 7.1\% in OWA, corresponding to absolute decreases of 0.76, 0.33, and 0.08, respectively. These reductions indicate that TS\_Adam effectively enhances performance in highly diverse and irregular short-term forecasting settings.

This observed improvement is statistically significant, as confirmed by pairwise t-tests comparing TS\_Adam against each baseline across all M4 series. The results, summarized in Table~\ref{tab:t_test_s}, show that TS\_Adam significantly outperforms Adam, Yogi, AdamW, and Lookahead in all three metrics (all $p < 0.05$), while also surpassing SGD in MASE and OWA. These statistical tests validate that performance gains are robust and not attributable to random variation.

\begin{table}[h]
  \centering
  \caption{Statistical significance of TS\_Adam versus baselines on the M4 dataset for short-term forecasting. P-values are from pairwise t-tests comparing forecast accuracy across all series of the M4 dataset.}
  
  \label{tab:t_test_s}
  \resizebox{0.5\textwidth}{!}{
   \begin{tabular}{cccc}
    \toprule
    Comparison Pair & SMAPE & MASE  & OWA \\
    \midrule
    vs Adam & 0.0025 & 0.0059 & 0.0041 \\
    vs SGD & 0.0938 & 0.012 & 0.0267 \\
    vs Yogi & 0.0011 & 0.0034 & 0.0022 \\
    vs AdamW & 0.0024 & 0.0059 & 0.004 \\
    vs Lookahead & 0.0029 & 0.0247 & 0.0165 \\
    \bottomrule
    \end{tabular}%
    }
\end{table}%

\subsubsection{Convergence Behavior and Practical Implications}

To complement the quantitative evaluation, we analyze the convergence behavior of TS\_Adam compared to the standard Adam optimizer. Fig~\ref{fig:loss_function} presents the test loss trajectories for representative long- and short-term forecasting tasks. Consistent with the theoretical analysis in Section~\ref{sec:Impact of Temporal NonStationarity on Optimization}, TS\_Adam exhibits faster convergence and achieves lower final loss values in most cases.

Although minor oscillations occur in the early training phase due to the removal of the second-order bias correction, they remain within acceptable bounds and do not compromise training stability. To systematically assess these aspects, we conduct robustness tests under two challenging scenarios in Section~\ref{sec:Model Robustness to Data and Training Perturbations}: (i) Gaussian noise, which amplifies gradient variance and directly tests the optimizer's sensitivity to the oscillations mentioned above; and (ii) extreme outliers, which simulate abrupt, severe data perturbations to evaluate stability under quasi-nonstationary conditions. The results confirm that TS\_Adam maintains a superior and more stable performance compared to Adam in both settings. Instead, this controlled adjustment improves the responsiveness of the optimizer to distributional changes, ultimately leading to better forecast performance.

From a practical standpoint, such improvements in convergence speed and forecasting performance are especially beneficial in time series tasks that demand both training efficiency and model stability. These results indicate that TS\_Adam achieves a desirable balance between adaptability to non-stationarity and overall robustness, supporting its utility in a broad range of forecasting scenarios.

\begin{figure*}[ht]
\vskip 0.2in
\begin{center}
\includegraphics[width=1\textwidth]{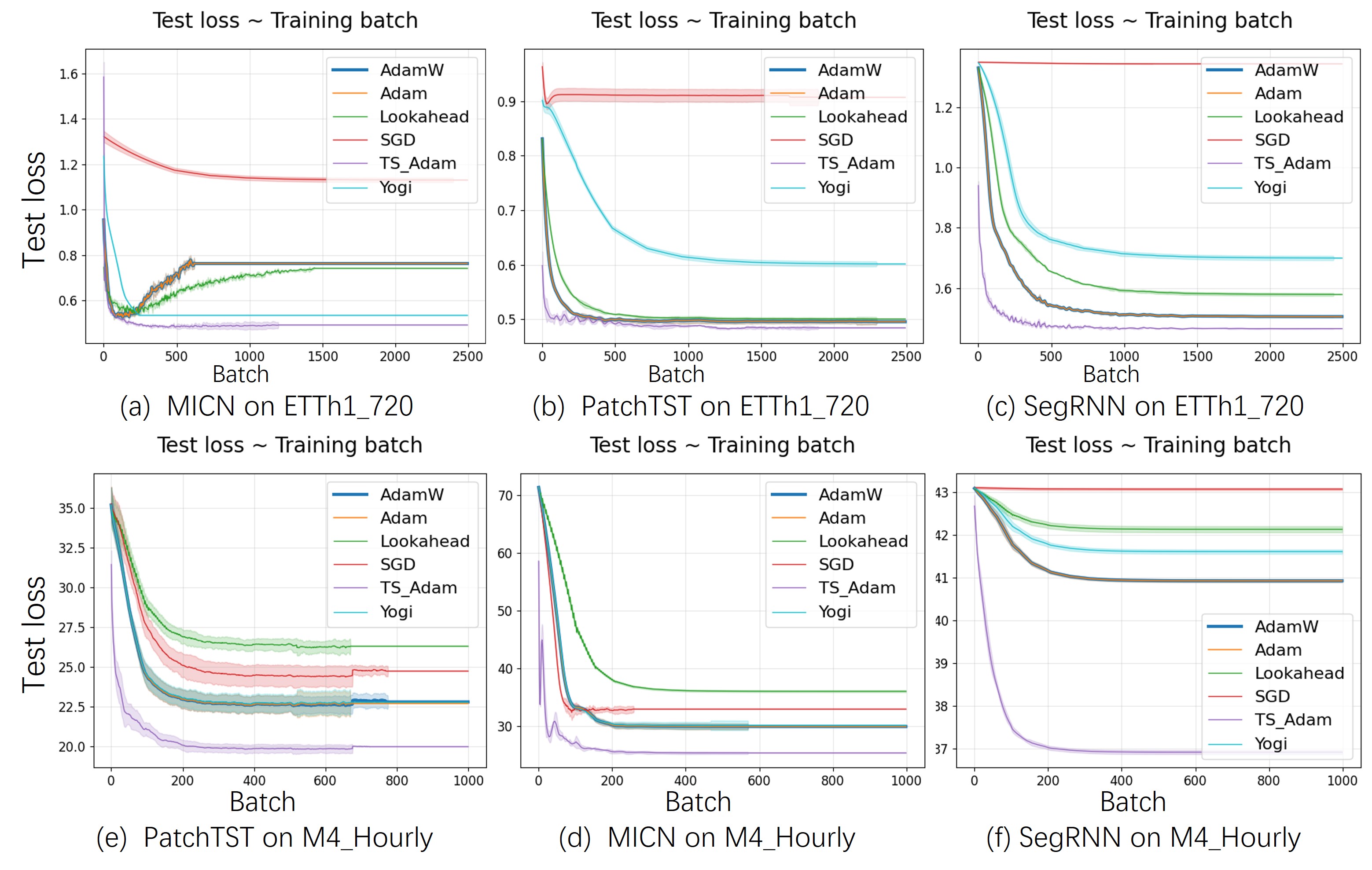}
\caption{Test loss curves during training for various optimizers on ETTh1 (T=720) and M4-Hourly, using MICN, PatchTST, and SegRNN. Each curve shows the mean test loss, with shaded areas indicating one standard deviation. For optimizers that stop early, a flat horizontal line denotes the final value without uncertainty shading.} 
\label{fig:loss_function}
\end{center}
\vskip -0.2in
\end{figure*}

\subsection{Ablation Studies}

We perform ablation studies to evaluate whether the observed improvements stem from core design choices rather than hyperparameter tuning or dataset-specific effects, and to assess the generalizability of TS\_Adam across different optimization scenarios. Experiments are performed on the MICN model using ETTh1 and ETTh2, two widely adopted long-term forecasting benchmarks with distinct seasonal and trend characteristics. This setting allows us to validate the practical reliability and theoretical consistency of TS\_Adam under distributional drift.

\subsubsection{Hyperparameter Sensitivity Analysis}

\paragraph{Sensitivity to internal hyperparameters}

\begin{figure*}[ht]
\vskip 0.2in
\begin{center}
\includegraphics[width=1\textwidth]{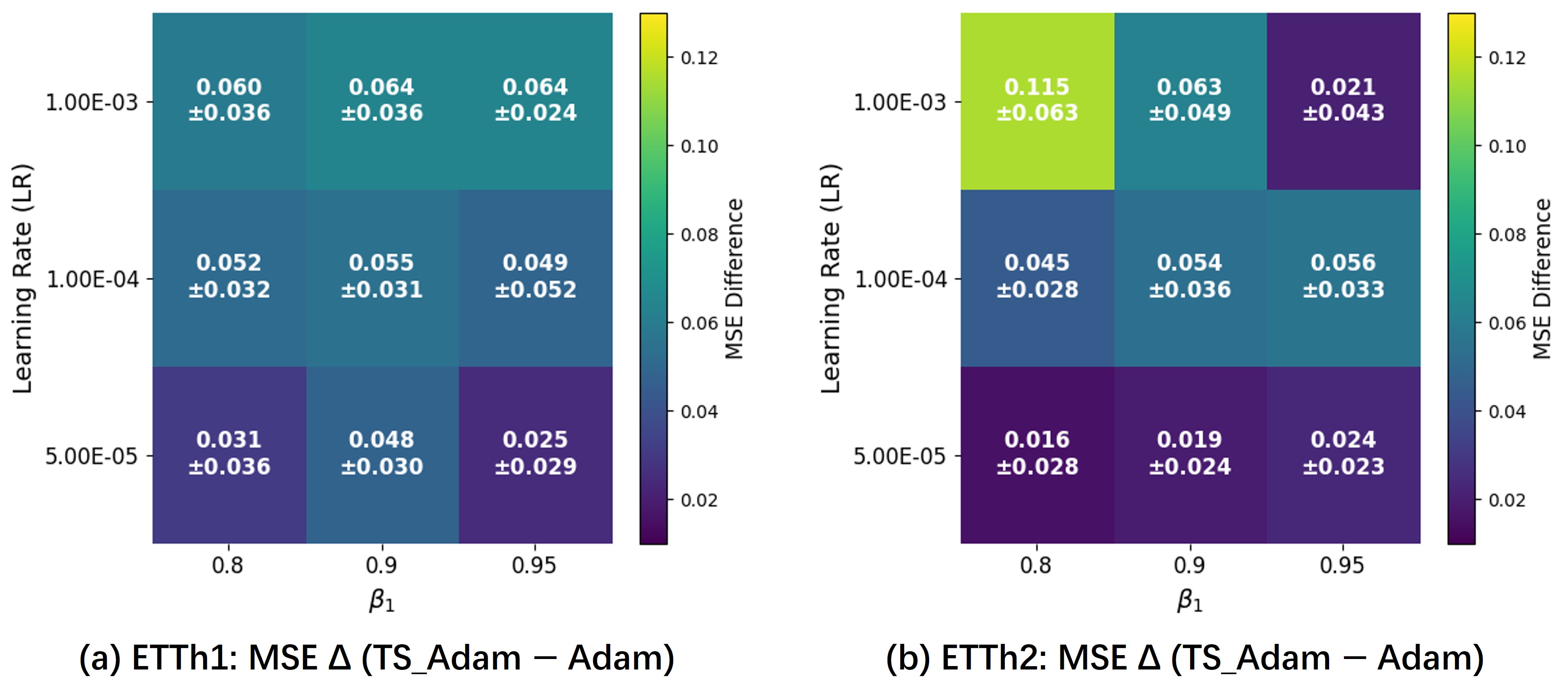}
\caption{Sensitivity of TS\_Adam to $\beta_1$ and learning rate on ETTh1 (a) and ETTh2 (b). Each cell shows the mean ± standard deviation of the absolute MSE difference (TS\_Adam − Adam), averaged over four prediction lengths.} 
\label{fig:hyperparam_sensitivity}
\end{center}
\vskip -0.2in
\end{figure*}

We analyze the sensitivity of TS\_Adam to two key hyperparameters: the learning rate $\alpha$ and the first-order decay coefficient $\beta_1$. This choice is grounded in our theoretical formulation, where TS\_Adam explicitly modifies the effective learning rate in Adam:
\[
\eta_t = \alpha \cdot \frac{\sqrt{1 - \beta_2^t}}{1 - \beta_1^t}.
\]

This term, as discussed in Theorem~2,  directly influences the trade-off between distributional drift and gradient noise in the dynamic regret bound. Since TS\_Adam removes $\beta_2$ by design, we focus on the remaining influential factors($\alpha$ and $\beta_1$) to fairly assess its internal behavior.

Fig~\ref{fig:hyperparam_sensitivity} reports the reduction in MSE of TS\_Adam relative to Adam under various combinations of learning rate $\alpha$ and first-order momentum coefficient $\beta_1$. 

Across both datasets, TS\_Adam consistently outperforms Adam under most configurations, confirming its robustness to internal hyperparameter variations. Performance gain is more sensitive to learning rate: larger $\alpha$ values generally produce greater MSE reductions, with the best improvements occurring when $\alpha = 1\text{e-}3$. This supports our theoretical analysis that higher learning rates magnify regret accumulation differences between TS\_Adam and Adam, leading to larger gains.

In contrast, $\beta_1$ has a less pronounced effect. Although there are slight variations across $\beta_1 \in \{0.8, 0.9, 0.95\}$, the performance improvement remains stable, with no significant degradation. This indicates that TS\_Adam’s update rule is not overly sensitive to moderate changes in $\beta_1$, which enhances its applicability in practice.

In particular, even in conservative settings (such as $\alpha = 5\text{e-}5$), TS\_Adam still yields positive MSE gains, suggesting that its effectiveness does not depend on aggressive tuning. This property is particularly useful in practical forecasting scenarios where hyperparameter search budgets are constrained and stable default settings are desirable.

\paragraph{Impact of batch size on final performance}

\begin{table}[h]
  \centering
  \caption{Evaluating the Impact of Batch Size on Optimizer Performance.}
  \label{tab:batch}
  \resizebox{0.7\textwidth}{!}{
    \begin{tabular}{c|c|cc|cc}
    \toprule
    \multirow{3}[6]{*}{Optimizer} & \multirow{3}[6]{*}{Batch} & \multicolumn{4}{c}{Dataset} \\
\cmidrule{3-6}          &       & \multicolumn{2}{c|}{ETTh1} & \multicolumn{2}{c}{ETTh2} \\
\cmidrule{3-6}          &       & MSE   & MAE   & MSE   & MAE \\
    \midrule
    \multirow{3}[2]{*}{Adam} & 16    & 0.536±0.028 & 0.518±0.015 & 0.616±0.038 & 0.541±0.019 \\
          & 32    & 0.569±0.024 & 0.528±0.011 & 0.581±0.028 & 0.526±0.016 \\
          & 64    & 0.545±0.011 & 0.516±0.006 & 0.586±0.019 & 0.529±0.01 \\
    \midrule
    \multirow{3}[2]{*}{TS\_Adam} & 16    & 0.498±0.014 & 0.497±0.009 & 0.537±0.015 & 0.502±0.009 \\
          & 32    & 0.441±0.005 & 0.460±0.003 & 0.533±0.01 & 0.501±0.005 \\
          & 64    & 0.437±0.003 & 0.455±0.002 & 0.543±0.008 & 0.507±0.005 \\
    \bottomrule
    \end{tabular}%
    }
\end{table}%

We further evaluate the sensitivity of optimizer performance to batch size using the MICN model. Table~\ref{tab:batch} summarizes the MSE and MAE results of Adam and TS\_Adam on ETTh1 and ETTh2 with batch sizes 16, 32 and 64.

Across all configurations, TS\_Adam consistently outperforms Adam, confirming its robustness to mini-batch size variation. Although fluctuations are observed with increasing batch size, the trends are dataset-dependent and do not follow a consistent direction. For example, on ETTh1, performance improves slightly as batch size increases, whereas on ETTh2, optimal results occur at smaller batch sizes.

Quantitatively, TS\_Adam achieves an average reduction in MSE of 0.070 (13.1\%) and a reduction in MAE of 0.044 (8.4\%) over Adam in ETTh1. On ETTh2, the corresponding improvements are 0.059 (10.0\%) in MSE and 0.026 (5.0\%) in MAE. These gains remain stable in all settings, indicating that TS\_Adam maintains its advantage under different levels of gradient estimation noise.

From an application perspective, this robustness is beneficial for time-series forecasting tasks, where batch size is commonly treated as a tunable hyperparameter. TS\_Adam performs consistently well in different batch sizes without requiring extensive tuning, simplifying model selection, and enhancing usability in diverse experimental settings.

\subsubsection{Model Robustness to Data and Training Perturbations}
\label{sec:Model Robustness to Data and Training Perturbations}

The preceding ablations validated the sensitivity of our method to its intrinsic hyperparameters. To further assess the practical utility and generalizability of TS\_Adam in real-world scenarios, we now investigate its robustness to various external and internal perturbations commonly encountered during model training. Real-world time-series data are often contaminated with noise and anomalies, while the choice of training configurations can significantly impact final performance. A robust optimizer should maintain consistent superiority under such non-ideal conditions. Accordingly, we design experiments from three critical perspectives: robustness to data Noise, adaptability to different learning rate scheduling strategies, and stability under varying regularization strengths.

\paragraph{Robustness to Data Noise and Outliers}

\begin{table}[h]
  \centering
  \caption{Performance comparison under data noise and outliers. Results are reported as mean ± standard deviation.}
  \label{tab:noise}
  \resizebox{0.7\textwidth}{!}{
    \begin{tabular}{c|c|cc|cc}
    \toprule
    \multirow{3}[6]{*}{optimizer} & \multirow{3}[6]{*}{noise} & \multicolumn{4}{c}{Dataset} \\
    \cmidrule{3-6}          &       & \multicolumn{2}{c|}{ETTh1} & \multicolumn{2}{c}{ETTh2} \\
    \cmidrule{3-6}          &       & MSE   & MAE   & MSE   & MAE \\
    \midrule
    \multirow{2}[2]{*}{Adam} & Gaussian & 0.54±0.007 & 0.513±0.003 & 0.583±0.005 & 0.527±0.001 \\
          & Outlier & 0.514±0.002 & 0.499±6.7e-4 & 0.569±0.015 & 0.519±0.007 \\
    \midrule
    \multirow{2}[2]{*}{TS\_Adam} & Gaussian & 0.438±0.002 & 0.455±0.001 & 0.543±0.003 & 0.507±0.002 \\
          & Outlier & 0.436±8.9e-4 & 0.455±6.2e-4 & 0.554±0.002 & 0.514±0.001 \\
    \bottomrule
    \end{tabular}%
    }
\end{table}%

As shown in Table~\ref{tab:noise}, TS\_Adam demonstrates significantly better robustness than Adam when the training data are perturbed by noise or outliers. Under both Gaussian noise and extreme outlier settings, TS\_Adam consistently outperforms Adam across the ETTh1 and ETTh2 datasets.

The performance advantage of TS\_Adam is clearly reflected in the lower mean squared error (MSE). For example, on ETTh1 with Gaussian noise, TS\_Adam achieves an MSE of 0.438, compared to 0.540 for Adam—a reduction of nearly 19\%. Moreover, the standard deviations of TS\_Adam’s results are substantially smaller, indicating a more stable and repeatable optimization process that is less affected by data corruption. These observations confirm that the design of TS\_Adam, which aims to improve adaptability to distribution shifts, also enhances its resilience to common data imperfections such as noise and outliers.

\paragraph{Robustness to Learning Rate Scheduling Strategies}

\begin{table}[h]
  \centering
  \caption{Performance under different learning rate scheduling strategies. type1 denotes exponential decay, type2 denotes delayed exponential decay. Results are reported as mean ± standard deviation.}
  \label{tab:lr_strategies}
  \resizebox{0.7\textwidth}{!}{
    \begin{tabular}{c|c|cc|cc}
    \toprule
    \multirow{3}[6]{*}{Optimizer} & \multirow{3}[6]{*}{lr\_strategy} & \multicolumn{4}{c}{Dataset} \\
    \cmidrule{3-6}          &       & \multicolumn{2}{c|}{ETTh1} & \multicolumn{2}{c}{ETTh2} \\
    \cmidrule{3-6}          &       & MSE   & MAE   & MSE   & MAE \\
    \midrule
    \multirow{3}[2]{*}{Adam} & cosine & 0.554±0.005 & 0.522±0.001 & 0.587±0.007 & 0.53±0.003 \\
          & type1 & 0.545±0.008 & 0.516±0.004 & 0.586±0.005 & 0.529±0.001 \\
          & tpye2 & 0.556±0.014 & 0.522±0.006 & 0.587±0.008 & 0.53±0.003 \\
    \midrule
    \multirow{3}[2]{*}{TS\_Adam} & cosine & 0.443±0.004 & 0.46±0.002 & 0.53±0.004 & 0.5±0.002 \\
          & type1 & 0.437±0.001 & 0.455±3.5e-4 & 0.543±0.003 & 0.507±0.002 \\
          & tpye2 & 0.475±0.015 & 0.481±0.008 & 0.531±0.009 & 0.5±0.006 \\
    \bottomrule
    \end{tabular}%

    }
\end{table}%

TS\_Adam demonstrates consistent superiority over Adam across diverse learning rate schedules, as evidenced by the results in Table \ref{tab:lr_strategies}. Its performance advantage remains stable irrespective of the specific scheduling strategy employed.
A key observation is the significant performance gap on both datasets. For instance, under the type1strategy on ETTh1, TS\_Adam achieves an MSE of 0.437, substantially lower than Adam's 0.545. This consistent lead underscores that the efficacy of TS\_Adam is intrinsic to its optimization dynamics, rather than being dependent on a particular learning rate adjustment pattern. The method's robustness ensures reliable performance in practical settings where the optimal scheduling strategy may not be known a priori.

\paragraph{Robustness to Regularization Strength}

\begin{table}[h]
  \centering
  \caption{Performance under different weight‑decay (wd) coefficients. Results are reported as mean ± standard deviation.}
  \label{tab:weight_decay}
  \resizebox{0.7\textwidth}{!}{
    \begin{tabular}{c|c|cc|cc}
    \toprule
    \multirow{3}[6]{*}{Optimizer} & \multirow{3}[6]{*}{wd} & \multicolumn{4}{c}{Dataset} \\
    \cmidrule{3-6}          &       & \multicolumn{2}{c|}{ETTh1} & \multicolumn{2}{c}{ETTh2} \\
    \cmidrule{3-6}          &       & MSE   & MAE   & MSE   & MAE \\
    \midrule
    \multirow{3}[1]{*}{Adam} & 0     & 0.569±0.023 & 0.528±0.01 & 0.581±0.027 &  0.526±0.014 \\
          & 1.00E-04 & 0.569±0.068 & 0.528±0.01 & 0.581±0.027 &  0.526±0.014 \\
          & 1.00E-02 & 0.568±0.023 & 0.528±0.01 & 0.581±0.027 & 0.526±0.014 \\
    \midrule
    \multirow{3}[1]{*}{TS\_Adam} & 0     & 0.441±0.005 & 0.460±0.003 & 0.533±0.01 & 0.501±0.005 \\
          & 1.00E-04 & 0.440±0.001 & 0.460±0.003 & 0.533±0.034 & 0.501±0.005 \\
          & 1.00E-02 & 0.440±0.001 & 0.460±0.003 & 0.534±0.027 & 0.501±0.005 \\
    \bottomrule
    \end{tabular}%
    }
\end{table}%

Table~\ref{tab:weight_decay} shows that weight‑decay has minimal impact on performance for both optimizers, suggesting this regularization technique plays a limited role in the evaluated time series forecasting tasks. Across all tested coefficients (1e-4, 1e-2), MSE values remain almost unchanged for both TS\_Adam and Adam on both datasets.

This insensitivity to weight‑decay may reflect characteristics of the time series data and model architecture, where other forms of implicit regularization or data structure itself already provide sufficient inductive bias. Consequently, the consistent performance advantage of TS\_Adam over Adam is approximately 22\% lower than MSE on ETTh1 and 8\% on ETTh2 apparently stems from optimization dynamics rather than regularization effects.

These findings imply that in practical time series forecasting applications where optimal regularization strength is unknown, TS\_Adam offers more reliable performance without requiring careful weight‑decay tuning.

\subsubsection{Generalization to Other Optimizers}

To assess the generalizability of the TS\_Adam strategy, we apply its core modification, namely the removal of the second-order bias correction term, to other widely used adaptive optimizers, resulting in AdamW\dag, Yogi\dag, and Lookahead\dag. These variants preserve the original update structure of each optimizer while adopting the TS\_Adam adjustment to the effective learning rate.

\begin{table}[h]
  \centering
  \caption{Extending TS\_Adam Strategy to Other Adaptive Optimizers. \dag~denotes the use of the TS\_Adam strategy, which removes second-order bias correction.}
  \label{tab:generalization}
  \resizebox{0.7\textwidth}{!}{
    \begin{tabular}{c|cc|cc}
    \toprule
    \multirow{3}[6]{*}{Optimizer} & \multicolumn{4}{c}{Dataset} \\
\cmidrule{2-5}          & \multicolumn{2}{c|}{ETTh1} & \multicolumn{2}{c}{ETTh2} \\
\cmidrule{2-5}          & MSE   & MAE   & MSE   & MAE \\
    \midrule
    AdamW & 0.568±0.023 & 0.528±0.01 & 0.581±0.027 & 0.526±0.014 \\
    AdamW\dag & 0.441±0.006 & 0.460±0.004 & 0.533±0.012 & 0.501±0.006 \\
    \midrule
    Yogi  & 0.457±0.002 &  0.476±0.001 & 0.572±0.002 & 0.522±0.001 \\
    Yogi\dag & 0.445±0.001 & 0.462±0.001 & 0.561±0.007 & 0.515±0.004 \\
    \midrule
    lookahead & 0.534±0.004 & 0.507±0.002 & 0.575±0.014 & 0.523±0.006 \\
    lookahead\dag & 0.470±0.007 & 0.480±0.004 & 0.542±0.015 & 0.506±0.008 \\
    \bottomrule
    \end{tabular}%
    }
\end{table}%


Table~\ref{tab:generalization} presents the performance comparison on the ETTh1 and ETTh2 datasets using the MICN model. We observe consistent error reductions across all three optimizer variants. Specifically, AdamW\dag achieves reductions of 0.087 in MSE (a relative decrease of 15.3\%) and 0.027 in MAE (a relative decrease of 5.1\%). Yogi\dag shows more modest reductions of 0.012 in MSE (2.2\%) and 0.012 in MAE (2.4\%), while Lookahead\dag reduces errors by 0.049 in MSE (9.0\%) and 0.012 in MAE (2.3\%).

These results highlight two important findings. First, the performance gains are not specific to Adam, suggesting that the TS\_Adam strategy captures a general optimization principle that is effective in non-stationary settings. Second, this adaptability enhances the practical relevance of the method, as it can be readily incorporated into a wide range of bias-corrected adaptive optimizers without structural changes or additional tuning effort. Such flexibility is particularly valuable in real-world forecasting applications, where different optimizers may be preferred based on model characteristics or task-specific considerations.

\subsubsection{Empirical Verification of Cumulative Regret Upper Bound}

\begin{figure*}[ht]
\vskip 0.2in
\begin{center}
\includegraphics[width=0.8\textwidth]{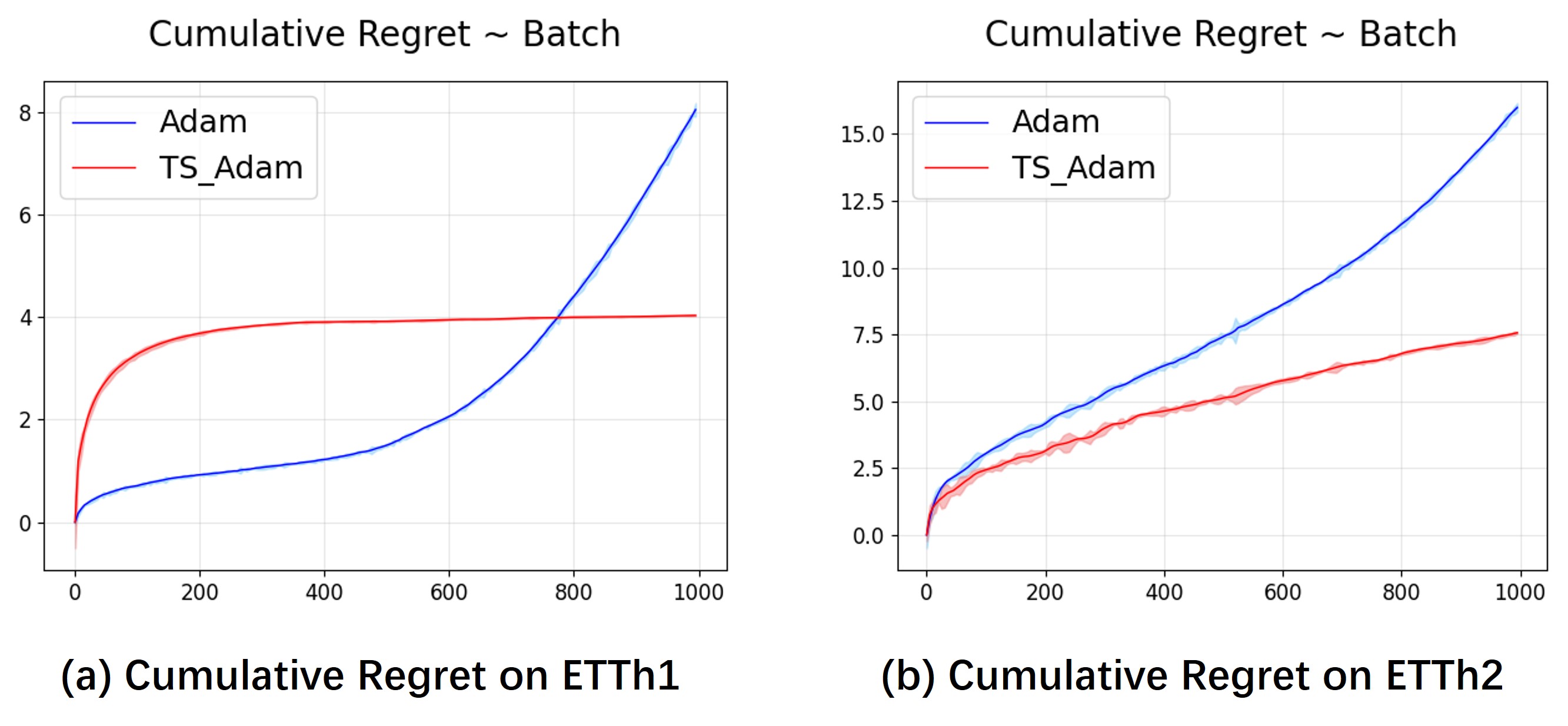}
\caption{Cumulative regret curves on ETTh1 and ETTh2 using the MICN model with prediction length 192. Each curve shows the mean regret over three runs, and the shaded area denotes $\pm 5$ standard deviations to enhance visibility. (a) ETTh1; (b) ETTh2.} 
\label{fig:Regret}
\end{center}
\vskip -0.2in
\end{figure*}

To empirically validate our theoretical insight that suppressing regret from distributional drift is more critical than mitigating noise-induced regret in nonstationary time series forecasting, we compare the cumulative regret of Adam and TS\_Adam during training.

We follow the standard online learning formulation~\cite{zhang2018adaptive} to compute cumulative regret $R(T)$ in training batches:
\begin{equation}
R(T) = \sum_{t=1}^T \left[ f_t(\theta_t) - f_t(\theta_t^*) \right],
\end{equation}
where $f_t(\cdot)$ denotes the test loss at step $t$, $\theta_t$ is the model parameter at that step, and $\theta_t^*$ corresponds to the parameter achieving the lowest test loss across training. For each optimizer, cumulative regret is computed per run and averaged over three runs. The resulting curves are shown in Fig~\ref{fig:Regret}.

TS\_Adam consistently accumulates less cumulative regret than Adam over the course of training. Although it shows slightly higher regret at the very beginning, this difference disappears within a few epochs as its improved adaptability to distributional shifts takes effect. TS\_Adam more effectively suppresses drift-induced regret, resulting in lower overall regret and better final performance. In contrast, although Adam benefits from early noise correction, its cumulative regret increases in a highly unstable manner, especially on ETTh2, which poses a concern for real-world forecasting systems that require stable and reliable optimization behavior. These findings support our theoretical claim that, under distributional drift, controlling regret from shifting data distributions is more critical than reducing early-stage noise.

\section{Conclusion}
\label{sec:Conclusion}

In this paper, we have proposed TS\_Adam, a simple but effective optimizer for time-series forecasting. By removing the second-order bias correction from the learning rate computation, TS\_Adam improves responsiveness to distributional drift while preserving the robustness of stochastic gradient updates.

Comprehensive experiments on long- and short-term forecasting benchmarks demonstrate that TS\_Adam consistently outperforms baseline optimizers such as Adam, AdamW, Yogi and Lookahead in diverse models and datasets. These improvements are particularly valuable in practical settings where predictive accuracy, stability, and adaptability are essential for reliable temporal modeling.

Ablation studies further confirm the stability of TS\_Adam with respect to hyperparameter changes and its applicability in different optimization scenarios. In addition, empirical analysis of cumulative regret supports the theoretical insights, reinforcing the importance of mitigating drift-induced regret in non-stationary environments.

While TS\_Adam achieves notable gains on datasets with pronounced periodicity, its effectiveness on trend-dominated series remains relatively limited. Future work will explore strategies to further improve adaptability to persistent trends.

In summary, TS\_Adam offers a lightweight, generalizable optimization strategy that achieves strong forecasting performance under real-world, dynamic conditions. Its simplicity and consistent results make it a practical choice for time series forecasting tasks requiring both accuracy and robustness.

\section{Acknowledgments}
This work was supported in part by China Guangxi Science and Technology Plan Project under grant AD23026096, Chile CONICYT FONDECYT Regular under grant 1181809, and Chile CONICYT FONDEF under grant ID16I10466.

\appendix
\section{Proof of Theorem 1}

\begin{proof}[{ Proof } ]
We derive the distributional properties of $Y_t$ by analyzing its expectation, variance, and Gaussian structure. Specifically:

{Expectation:}
Using the zero-mean assumptions $\mathbb{E}[\varepsilon_i] = \mathbb{E}[R_t] = 0$ and linearity of expectation:
\[
\mathbb{E}[Y_t] = c + \sum_{i=2}^{t}\mathbb{E}[\varepsilon_i] + S_t + \mathbb{E}[R_t] = c + S_t.
\]

{Variance:}
By independence between $\{\varepsilon_i\}$ and $R_t$, and the deterministic nature of $S_t$:
\[
\operatorname{Var}(Y_t) = \operatorname{Var}\left(\sum_{i=2}^{t}\varepsilon_i\right) + \operatorname{Var}(R_t) = (t-1)\sigma_\varepsilon^{2} + \sigma_R^{2}.
\]

{Distribution:}
Since $\varepsilon_i \sim \mathcal{N}(0,\sigma_\varepsilon^2)$ and $R_t \sim \mathcal{N}(0,\sigma_R^2)$ are independent Gaussian variables:
\[
Y_t \sim \mathcal{N}\left(c + S_t,\ (t-1)\sigma_\varepsilon^{2} + \sigma_R^{2}\right),
\]
where both mean and variance are explicit functions of $t$.
\end{proof}

\section{Proof of Theorem 2}

\noindent\textbf{Notation and assumptions:} 
All quantities are in $\mathbb{R}^d$: $\theta_t, \theta_t^*, \nabla f_t(\theta_t) \in \mathbb{R}^d$. 
The norm $\|\cdot\|$ denotes the Euclidean norm. 
The assumptions are: (1) convexity of $f_t$, 
(2) gradient bound $\|\nabla f_t(\theta)\| \leq G$, 
(3) drift condition $\|\theta_{t+1}^* - \theta_t^*\| \leq \delta$.

\begin{proof}
Consider the online gradient descent algorithm $\theta_{t+1} = \theta_t - \eta_t \nabla f_t(\theta_t)$ with dynamic regret $R(T) = \sum_{t=1}^T [f_t(\theta_t) - f_t(\theta_t^*)]$.

By convexity of $f_t$:
\begin{align*}
f_t(\theta_t) - f_t(\theta_t^*) \leq \langle \nabla f_t(\theta_t), \theta_t - \theta_t^* \rangle.
\end{align*}

Decompose $\theta_t - \theta_t^* = (\theta_t - \theta_{t-1}^*) + (\theta_{t-1}^* - \theta_t^*)$:
\begin{align*}
f_t(\theta_t) - f_t(\theta_t^*) \leq \langle \nabla f_t(\theta_t), \theta_t - \theta_{t-1}^* \rangle + \langle \nabla f_t(\theta_t), \theta_{t-1}^* - \theta_t^* \rangle.
\end{align*}

Using identity $2\langle a, b \rangle = \|a\|^2 + \|b\|^2 - \|a - b\|^2$:
\begin{align*}
2\langle \nabla f_t(\theta_t), \theta_t - \theta_{t-1}^* \rangle &= \|\nabla f_t(\theta_t)\|^2 + \|\theta_t - \theta_{t-1}^*\|^2 \\
&\quad- \|\theta_t - \theta_{t-1}^* - \nabla f_t(\theta_t)\|^2.
\end{align*}

From update rule $\theta_{t+1} = \theta_t - \eta_t \nabla f_t(\theta_t)$:
\begin{align*}
\theta_t - \theta_{t-1}^* - \eta_t \nabla f_t(\theta_t) = \theta_{t+1} - \theta_{t-1}^*.
\end{align*}

Therefore:
\begin{align*}
\langle \nabla f_t(\theta_t), \theta_t - \theta_{t-1}^* \rangle &= \frac{1}{2\eta_t} \left[ \|\theta_t - \theta_{t-1}^*\|^2 - \|\theta_{t+1} - \theta_{t-1}^*\|^2 \right]\\
&\quad+ \frac{\eta_t}{2} \|\nabla f_t(\theta_t)\|^2.
\end{align*}

By Cauchy-Schwarz inequality and assumptions $\|\nabla f_t(\theta_t)\| \leq G$, $\|\theta_{t-1}^* - \theta_t^*\| \leq \delta$:
\begin{align*}
\langle \nabla f_t(\theta_t), \theta_{t-1}^* - \theta_t^* \rangle &\leq |\nabla f_t(\theta_t)| \cdot |\theta_{t-1}^* - \theta_t^*| \leq G \delta.
\end{align*}

Combining results:
\begin{align*}
f_t(\theta_t) - f_t(\theta_t^*) &\leq \frac{1}{2\eta_t} \left[ \|\theta_t - \theta_{t-1}^*\|^2 - \|\theta_{t+1} - \theta_{t-1}^*\|^2 \right] + \frac{\eta_t G^2}{2} + G\delta.
\end{align*}

Summing over $t = 1, \ldots, T$:
\begin{align*}
R(T) &\leq \underbrace{\sum_{t=1}^T \frac{1}{2\eta_t} \left[ \|\theta_t - \theta_{t-1}^*\|^2 - \|\theta_{t+1} - \theta_{t-1}^*\|^2 \right]}_{(A)} + \frac{G^2}{2} \sum_{t=1}^T \eta_t + GT\delta.
\end{align*}

Expanding the telescoping sum:
\begin{align*}
(A)
&= \frac{1}{2\eta_1} \|\theta_1 - \theta_1^*\|^2 + \sum_{t=2}^T \frac{1}{2\eta_t} \|\theta_t - \theta_{t-1}^*\|^2 \\
&\quad- \sum_{t=1}^{T-1} \frac{1}{2\eta_t} \|\theta_{t+1} - \theta_{t-1}^*\|^2 - \frac{1}{2\eta_T} \|\theta_{T+1} - \theta_{T-1}^*\|^2.
\end{align*}


\textbf{Bounding $\|\theta_{t+1} - \theta_{t-1}^*\|^2$:}
We start from the decomposition:
\[
\theta_{t+1} - \theta_{t-1}^* = (\theta_{t+1} - \theta_t^*) + (\theta_t^* - \theta_{t-1}^*).
\]
Expanding the squared norm:
\begin{align*}
\|\theta_{t+1} - \theta_{t-1}^*\|^2 
&= \|\theta_{t+1} - \theta_t^*\|^2 + \|\theta_t^* - \theta_{t-1}^*\|^2 + 2\langle \theta_{t+1} - \theta_t^*, \theta_t^* - \theta_{t-1}^* \rangle \\
&= \|\theta_{t+1} - \theta_t^*\|^2 + \delta^2 + 2\langle \theta_{t+1} - \theta_t^*, \theta_t^* - \theta_{t-1}^* \rangle \quad \text{(since $\|\theta_t^* - \theta_{t-1}^*\| \leq \delta$)}.
\end{align*}

Applying Cauchy-Schwarz to the inner product:
\begin{align*}
\langle \theta_{t+1} - \theta_t^*, \theta_t^* - \theta_{t-1}^* \rangle 
&\geq -\|\theta_{t+1} - \theta_t^*\| \cdot \|\theta_t^* - \theta_{t-1}^*\| \\
&\geq -\|\theta_{t+1} - \theta_t^*\| \cdot \delta.
\end{align*}

From the update rule $\theta_{t+1} = \theta_t - \eta_t \nabla f_t(\theta_t)$ and gradient bound:
\[
\|\theta_{t+1} - \theta_t^*\| = \eta_t \|\nabla f_t(\theta_t)\| \leq \eta_t G.
\]

Thus, we obtain the lower bound:
\begin{align*}
\|\theta_{t+1} - \theta_{t-1}^*\|^2 
&\geq \|\theta_{t+1} - \theta_t^*\|^2 + \delta^2 - 2\eta_t G\delta \\
&\geq \|\theta_{t+1} - \theta_t^*\|^2 - 2G\delta + \delta^2 \quad \text{(since $\eta_t \leq 1$ for stable learning rates)}.
\end{align*}
This inequality is used in the telescoping sum to bound the negative terms.

After telescoping manipulation and applying bounds:

\begin{align*}
(A) &\leq \frac{1}{2\eta_1} \|\theta_1 - \theta_1^*\|^2 + \delta \sum_{t=1}^T \frac{1}{\eta_t} + G\delta T - \frac{\delta^2 T}{2}
\end{align*}


\textbf{Note on the $\delta^2$ term:} 
In the final bound, we retain $-\frac{\delta^2 T}{2}$ for completeness. 
When $\delta$ is small (typical in slowly varying environments), this term is dominated by $G\delta T$, 
but we include it to show the full derivation. The condition $\delta \ll G$ ensures $G\delta - \frac{\delta^2}{2} > 0$.

Substituting back:
\begin{align*}
R(T) &\leq \frac{1}{2\eta_1} \|\theta_1 - \theta_1^*\|^2 + \frac{G^2}{2} \sum_{t=1}^T \eta_t + \delta \sum_{t=1}^T \frac{1}{\eta_t} + G\delta T - \frac{\delta^2 T}{2}
\end{align*}
\end{proof}

\bibliography{refer}
\bibliographystyle{elsarticle-harv}




\end{document}